\documentclass{article}

\usepackage{xcolor}
\usepackage[table]{xcolor}
\usepackage{microtype}
\usepackage{graphicx}
\usepackage{subcaption}
\usepackage{booktabs} % for professional tables
\usepackage{comment} % for comments environment
\usepackage{float}
\usepackage{hyperref}
\usepackage{verbatim}
\usepackage{multirow}
\usepackage{tcolorbox}
\usepackage{multicol}
\usepackage[preprint]{icml2026}

\usepackage{amsmath}
\usepackage{amssymb}
\usepackage{mathtools}
\usepackage{amsthm}
\usepackage{amsmath, amsthm, amssymb, amsfonts}

\usepackage[capitalize,noabbrev]{cleveref}

\theoremstyle{plain}
\newtheorem{theorem}{Theorem}[section]

\theoremstyle{definition}
\newtheorem{definition}[theorem]{Definition}
\newtheorem{assumption}[theorem]{Assumption}
\theoremstyle{remark}

\newcommand{\yang}[1]{\textcolor{red}{\small{\bf [Yang: #1]}}} 
 
\usepackage[textsize=tiny]{todonotes}
\usepackage{comment}
\icmltitlerunning{Submission and Formatting Instructions for ICML 2026}

\begin{document}

\twocolumn[
  \icmltitle{MolEvolve: LLM-Guided Evolutionary Search for Interpretable Molecular Optimization}

  % It is OKAY to include author information, even for blind submissions: the
  % style file will automatically remove it for you unless you've provided
  % the [accepted] option to the icml2026 package.

  % List of affiliations: The first argument should be a (short) identifier you
  % will use later to specify author affiliations Academic affiliations
  % should list Department, University, City, Region, Country Industry
  % affiliations should list Company, City, Region, Country

  % You can specify symbols, otherwise they are numbered in order. Ideally, you
  % should not use this facility. Affiliations will be numbered in order of
  % appearance and this is the preferred way.
  \icmlsetsymbol{equal}{*}

  \begin{icmlauthorlist}
    \icmlauthor{Xiangsen Chen}{poly}
    \icmlauthor{Ruilong Wu}{ust}
    \icmlauthor{Yanyan Lan}{air}
    \icmlauthor{Ting Ma}{hit}
    \icmlauthor{Yang Liu}{poly}
    %\icmlauthor{Firstname6 Lastname6}{sch,yyy,comp}
    %\icmlauthor{Firstname7 Lastname7}{comp}
    %\icmlauthor{}{sch}
    % \icmlauthor{Firstname8 Lastname8}{sch}
    %\icmlauthor{Firstname8 Lastname8}{yyy,comp}
    %\icmlauthor{}{sch}
    %\icmlauthor{}{sch}
  \end{icmlauthorlist}

  \icmlaffiliation{ust}{Hong Kong University of Science and Technology(Guangzhou)}
  \icmlaffiliation{poly}{Hong Kong Polytechnic University}
  \icmlaffiliation{air}{Tsinghua University}
  \icmlaffiliation{hit}{Harbin Institute of Technology (Shenzhen)}

  \icmlcorrespondingauthor{}{yang-veronica.liu@polyu.edu.hk}

  % You may provide any keywords that you find helpful for describing your
  % paper; these are used to populate the "keywords" metadata in the PDF but
  % will not be shown in the document
  \icmlkeywords{Machine Learning, ICML}

  \vskip 0.3in
]

% this must go after the closing bracket ] following \twocolumn[ ...

% This command actually creates the footnote in the first column listing the
% affiliations and the copyright notice. The command takes one argument, which
% is text to display at the start of the footnote. The \icmlEqualContribution
% command is standard text for equal contribution. Remove it (just {}) if you
% do not need this facility.

% Use ONE of the following lines. DO NOT remove the command.
% If you have no special notice, KEEP empty braces:
\printAffiliationsAndNotice{}  % no special notice (required even if empty)
% Or, if applicable, use the standard equal contribution text:
% \printAffiliationsAndNotice{\icmlEqualContribution}

\begin{abstract}
Despite deep learning’s success in chemistry, its impact is hindered by a lack of interpretability and an inability to resolve 'activity cliffs', where minor structural nuances trigger drastic property shifts. Current representation learning, bound by the similarity principle, often fails to capture these structural-activity discontinuities. To address this, we introduce MolEvolve, an evolutionary framework that reformulates molecular discovery as an autonomous, look-ahead planning problem.
Unlike traditional methods that depend on human-engineered features or rigid prior knowledge, MolEvolve leverages a Large Language Model (LLM) to actively explore and evolve a library of executable chemical symbolic operations. By utilizing the LLM to cold start and an Monte Carlo Tree Search (MCTS) engine for test-time planning with external tools (e.g. RDKit), the system self-discovers optimal trajectories autonomously. This process evolves transparent reasoning chains that translate complex structural transformations into actionable, human-readable chemical insights. Experimental results demonstrate that MolEvolve’s autonomous search not only evolves transparent, human-readable chemical insights, but also outperforms baselines in both property prediction and molecule optimization tasks.
%\yang{Despite deep learning’s success in chemistry, its impact is hindered by a lack of interpretability and an inability to resolve 'activity cliffs', where minor structural nuances trigger drastic property shifts. Current representation learning, bound by the similarity principle, often fails to capture these structural-activity discontinuities. To address this, we introduce MolEvolve, an evolutionary framework that reformulates molecular discovery as an autonomous, look-ahead planning problem.
%Unlike traditional methods that depend on human-engineered features or rigid prior knowledge, MolEvolve leverages a Large Language Model (LLM) to actively explore and evolve a library of executable chemical symbolic operations. By utilizing the LLM to cold start and an Monte Carlo Tree Search (MCTS) engine for test-time planning with external tools (e.g. RDKit), the system self-discovers optimal search trajectories without human intervention \xiangsen{Replaced "without human intervention" to avoid potential challenges from reviewers, since we use prompts for the cold start.}. This process evolves transparent reasoning chains that translate complex structural transformations into actionable, human-readable chemical insights. Experimental results demonstrate that MolEvolve’s autonomous search not only evolves transparent, human-readable chemical insights, but also outperforms baselines in both property prediction and molecule optimization tasks.}
\end{abstract}

\section{Introduction}
\label{sec:intro}
%\textcolor{blue}{[lack a challenge image]}
Deep learning has become a paradigm shift in molecule-related tasks from manual experimentation to data-centric discovery. In this domain, deep learning models, especially Graph Neural Networks (GNNs), have demonstrated remarkable performance by modeling the geometric topology of molecules \cite{stark20223d,li2023interpretable}. 
Techniques such as pre-training on molecular data (e.g., atoms and bonds) have further solidified GNNs as the standard for diverse property prediction tasks \cite{stuyver2022quantum,cremer2023equivariant}. More recently, Large Language Models (LLMs) show great promise in scientific tasks. Domain-specific LLMs \cite{taylor2022galactica,liu2024moleculargpt} and Retrieval-Augmented Generation (RAG) methods \cite{xian2025molrag,zhang2025automated} leverage parametric domain knowledge and vast corpora of external literature to capture knowledge and semantic understanding that deep learning models may miss.

\begin{figure*}[t]
    \centering
    \includegraphics[width=\textwidth]{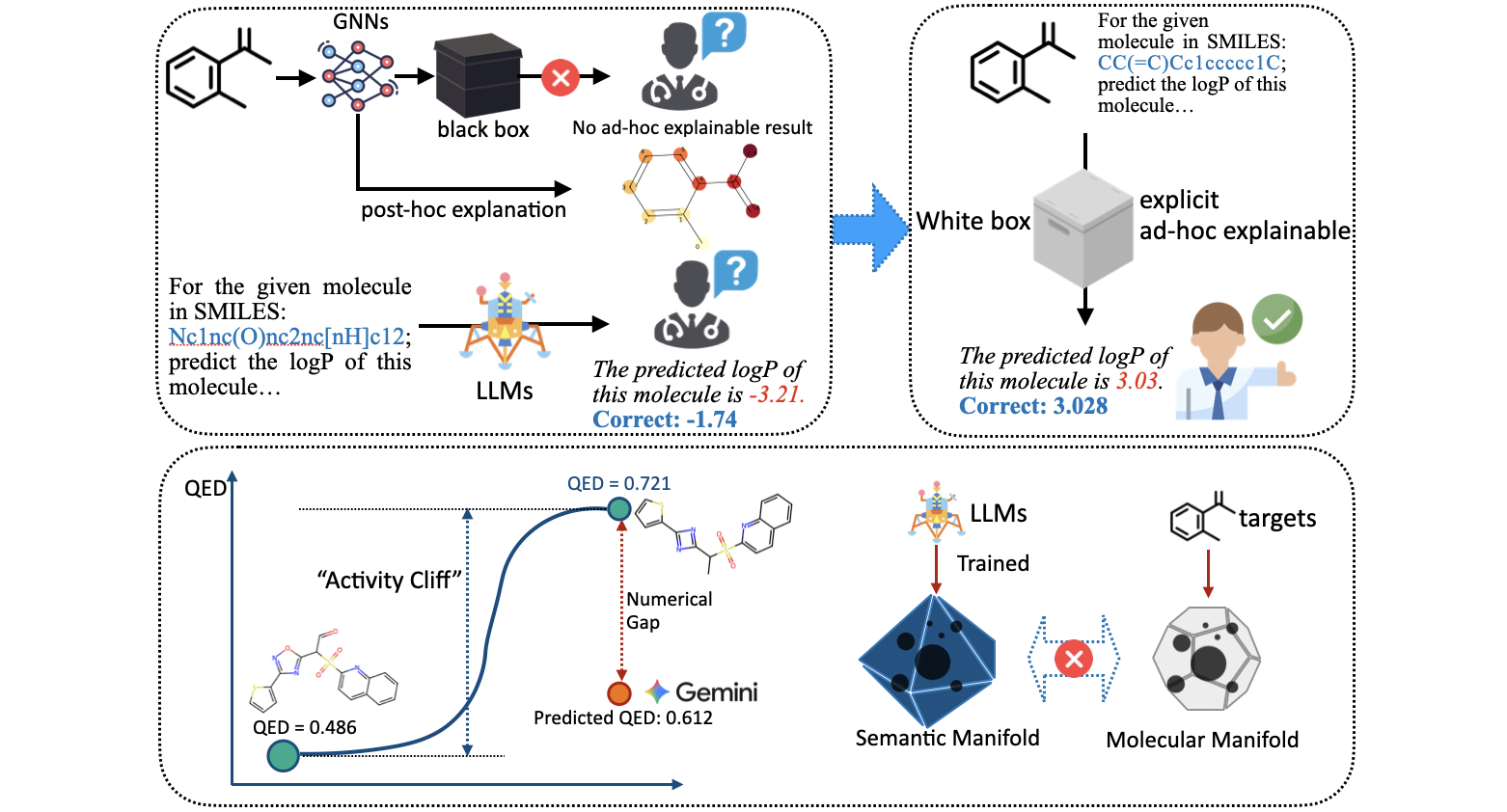}
    \caption{(Up) Existing GNNs act as black boxes with post-hoc interpretations, while LLMs suffer from numerical hallucinations on semantic manifolds. (Down) The representation-precision paradox: minor structural distances lead to "activity cliffs" in molecular manifolds, which LLMs fail to capture precisely due to semantic smoothing.}
    \label{fig:challenge}
\end{figure*} 

In spite of their capabilities, these existing methods face significant limitations. Deep learning models like GNNs operate as black boxes despite their powerful capabilities in molecular data mining \cite{li2023interpretable, lavecchia2025explainable, stegmann2023trustworthy}. They are often unable to provide accurate feature attributions to explain which specific substructures contribute to a given molecular property. This reduces the scientific interpretability of the models. In addition, GNN-based explanations (e.g., attention weights) serve as post-hoc approximations rather than intrinsic logic \cite{stegmann2023trustworthy, he2024explaining}. On the other hand, LLMs, even when fine-tuned on domain-specific knowledge, are prone to hallucination \cite{suzgun2025language}. LLMs are prone to generate plausible but factually incorrect scientific knowledge or properties \cite{truhn2023large}, and they often struggle to predict accurate quantitative results \cite{wang2025evaluating}. %Meanwhile, LLM-centric methods often rely on LLMs' generated output, failing to verify whether the generated rationale mathematically aligns with the target property. 

%We lack a method that can leverage data-driven precision while expressing predictive logic through explicit, executable chemical rules. 
Previous approaches often treat predictive precision and interpretability as separate objectives \cite{oviedo2022interpretable}. Consequently, a critical gap remains: scientists need to know not just the final results, but the explicit causal rules that enable a molecule to possess a certain property \cite{mengaldo2024explain}. 
We argue that the difficulty in achieving such interpretable precision arises from a fundamental representation-precision paradox.
As demonstrated in Appendix~\ref{appendix:theoretical_analysis}
%A fundamental missing piece is a mechanism to actively search for and verify explicit chemical rules, which transforms feature engineering from a static selection process into a dynamic, goal-oriented reasoning task. However, achieving such interpretable precision faces representation-precision paradox. 
and Figure~\ref{fig:challenge}, molecular properties often exhibit high Lipschitz constants, known as "activity cliffs," where minor structural nuances trigger drastic property shifts.
This challenge arises from the severe topological mismatch between molecular manifolds and continuous semantic manifolds. Recent work suggests that LLM internal representations are best characterized as manifolds where proximity reflects semantic similarity, rather than the precise Euclidean distances required for physical properties \cite{modell2025origins}. LLMs attempt to map discrete chemical tokens onto continuous physical manifolds, which causes them to struggle with precise quantitative tasks \cite{wang2025evaluating, truhn2023large}. 
While GNNs can fit these manifolds, their reasoning remains an opaque black box \cite{stegmann2023trustworthy}. 
To resolve this paradox, we argue that one must transcend continuous embeddings and operate within a space of executable symbolic logic. Unlike smooth vectors, discrete symbolic rules can naturally model non-linear discontinuities (e.g., logic branches).
A fundamental missing piece is a mechanism to actively search for and verify explicit chemical rules, thereby transforming chemical feature engineering into a dynamic, goal-directed reasoning task. 

%We argue that the difficulty in achieving such interpretable precision arises from a fundamental representation-precision paradox.
%A fundamental missing piece is a mechanism to actively search for and verify explicit chemical rules, which transforms feature engineering from a static selection process into a dynamic, goal-oriented reasoning task. However, achieving such interpretable precision faces representation-precision paradox. 
%This challenge arises from the severe topological mismatch between discrete molecular tokens and continuous physical manifolds. Recent work suggests that LLM internal representations are best characterized as manifolds where proximity reflects semantic similarity, rather than the precise Euclidean distances required for physical properties \cite{modell2025origins}.
%As shown in Figure~\ref{fig:challenge}, molecular properties often exhibit high Lipschitz constants, known as "activity cliffs," where minor structural nuances trigger drastic property shifts.
%While GNNs can fit these manifolds, their reasoning remains an opaque black box \cite{stegmann2023trustworthy}. LLMs attempt to map discrete chemical tokens onto continuous physical manifolds, which causes them to struggle with precise quantitative tasks \cite{wang2025evaluating, truhn2023large}. 

To address the challenge, we propose the MolEvolve framework.
% an LLM-augmented Monte Carlo Tree Search (MCTS) framework. 
Unlike simulations of fine-tuned LLMs or opaque GNN approximations, MolEvolve reformulates molecular discovery as explicit look-ahead planning over a space of executable chemical symbolic operations. Instead of outputting a final answer directly, the system actively constructs an explicit path.
%Our framework reformulates generating molecular property information as a search problem over the space of executable chemical rules.
First, the framework initiates with an LLM-based cold start. It functions as a computational chemist to diagnose the molecule and propose initial heuristic rules (e.g., logP for solubility calculation). It distills symbolic priors to seed the search.
Then, we apply external chemical tools to generate corresponding rules. These rules serve as the foundation for our initialized MCTS search tree. 
The MCTS tree navigates the vast space of executable chemical rules guided by an LLM that proposes candidate operations for diverse downstream tasks (e.g. \textit{"adding a hydroxyl group"} for optimization or \textit{"calculating the logP"}) with RDKit \cite{bento2020rdkit} as external tool. 
Crucially, we enforce a closed-loop verification mechanism: every proposed operation is rigorously evaluated by external tools during the search. This allows the model to discover the consequences of a rule and prune invalid or low-fidelity pathways before commitment. By grounding semantic reasoning in verifiable execution, our framework evolves transparent, high-precision chemical insights.
%Crucially, we employ a closed-loop mechanism where the system actively identifies features and tasks the LLM to iteratively evolve the features to capture missing structural insights. 
%And finally the framework constructs an explicit Monte Carlo tree, which represents the final selected features. 
%Therefore, our framework combines the reasoning power of LLMs with rigorous verification to significantly enhance scientific interpretability. 
We conduct quantitative experiments on different molecule-related tasks: property prediction and optimization. The results demonstrate that MolEvolve outperforms both GNNs and LLM-based methods in property prediction; We also show evidence that MolEvolve outperforms domain-specific models and vanilla strong LLMs. 
%\yang{not sure what is "vanilla strong LLMs with a small LLM as the base model", change it to just "vanilla strong LLMs"}\xiangsen{Fixed.}

Our main contributions are summarized as follows: 
\begin{itemize} 
\item We identify the challenge in achieving interpretable precision of molecular property, and propose MolEvolve to address it via test-time symbolic planning, bridging the gap between semantic intuition and physical rigor.
\item We propose a closed-loop mechanism where an LLM acts as a molecule operator to iteratively evolve and refine chemical descriptors based on feedback. 
\item Extensive experiments on benchmarks demonstrate that our method outperforms specialized GNNs and general-purpose LLMs, while providing human-readable chemical insights. 
\end{itemize}

\section{Related Works}
\paragraph{GNN for Molecules.}
% Graph Neural Networks (GNNs) evolves from local message-passing to sophisticated foundation models in molecular representation learning \cite{gilmer2020message}. 
% Graph Neural Networks (GNNs) evolves rapidly in molecular representation learning \cite{gilmer2020message}. 
%have established themselves as the de facto standard for molecular representation learning, evolving from local message-passing to sophisticated foundation models \cite{gilmer2020message}. 
GNNs have been developed to aggregate atomic features and bond interactions \cite{gilmer2020message,xu2018gin, yang2021graphformers,wang2022molecular}. 
Furthermore, pre-training strategies are capable of both learning representations from millions of molecular graphs \cite{zhou2023uni, hu2019strategies, rong2020self} and increasing performance in data-scarce scenarios \cite{gao2024ssgnn}.
%Furthermore, pre-training on millions of molecular graphs have enabled GNNs to learn representations that transfer to various downstream tasks \cite{zhou2023uni, hu2019strategies, rong2020self}. 
%In addition, self-supervised frameworks have shown that multi-level pre-training can increase performance in data-scarce scenarios \cite{gao2024ssgnn}.
% Recent methods employ graph structure learning and scaling behaviors to boost the performance across benchmarks like MoleculeNet \cite{wu2018moleculenet, van2022MoleculeACE}. 
%To improve data efficiency, self-supervised frameworks for specific chemical domains, such as polymers \cite{gao2024ssgnn}, have shown that multi-level pre-training (node, edge, and graph) can increase performance in data-scarce scenarios. 
Despite their success in benchmarks \cite{wu2018moleculenet, van2022MoleculeACE}, the inherent opacity of GNNs remains a significant problem in explainability. 
% Despite these successes, the inherent opacity of GNNs remains a critical bottleneck. 
% \textcolor{red}{[to\_refine]}
While recent efforts %in Explainable AI (XAI) for science 
attempt to elucidate GNN decisions via subgraph and fragment identification \cite{panapitiya2024fragnet}, attention mechanisms \cite{li2023interpretable}, or attribution generation \cite{chen2025aces,he2024explaining}, these explanations are predominantly post-hoc approximations \cite{ying2019gnnexplainer}.
%They highlight input regions (e.g., atoms or bonds) that influence the prediction but fail to provide ad-hoc, explicit, chemical rule-based explainable rationales (e.g., "this molecule is soluble because of the specific ratio of polar groups"). 
%Unlike these black-box embeddings, our framework seeks to construct explicit, executable chemical rules through a search-based paradigm, ensuring that the decision process is ad-hoc, transparent and chemically grounded by design.
% \textcolor{red}{[to\_add] In addition, the GNN models will over-smooth out the small structural difference during information aggregation. }
In addition, the GNN models will over-smooth out the small structural difference during information aggregation.
They face a critical challenge known as "activity cliffs" \cite{van2022MoleculeACE}. 
%GNNs often fail to capture these discontinuities and exhibit high errors on cliff-forming molecules, since they opt to over-smoothing embeddings in message passing . 
To mitigate this, recent works have focused on refining continuous representations on contrastive learning \cite{zhang2024molemcl, shen2023online}, multi-grained perception \cite{shu2025mtpnet} and weighted graphs \cite{chen2025mapcliff}. 
Despite these advances, these methods predominantly focus on patching the continuous embedding space. However, they remain constrained by the underlying message-passing paradigm, where small structural differences are inevitably over-smoothed out during information aggregation.

\paragraph{LLMs for Molecules.}
The success of LLMs has catalyzed a surge of interest in adapting them for scientific domains. 
In order to make LLMs understand molecules, early works focused on treating molecular strings as language \cite{weininger1988smiles, krenn2022selfies}. 
With this, fine-tuned LLMs are applied \cite{raffel2020t5,touvron2023llama} for molecule generation and property prediction \cite{taylor2022galactica, liu2024moleculargpt, wu2025uni}. 
More recently, the paradigm has shifted towards LLM agents that utilize external tools. These agents, like ChemCrow \cite{bran2023chemcrow} empower LLMs in chemical sciences by multi-agent collaboration or invoking chemistry-related scripts or APIs \cite{wu2025chemagent,ghafarollahi2025sciagents}. 
To mitigate the lack of domain knowledge, RAG-based methods \cite{xian2025molrag, zhang2025automated} integrate external scientific literature into the LLM's context window to enhance reasoning capabilities. 
%\textcolor{blue}{to\_modify] 
However, as highlighted in Section~\ref{sec:intro}, LLMs are prone to molecular numerical hallucinations because they rely on the continuous manifold assumption \cite{wang2025evaluating, truhn2023large}. 
Existing works attempt to mitigate this by evolving from linear SMILES modeling to multi-view multi-modal frameworks \cite{ju2025m2llm}. 
% In addition, employing optimized prompts and augmented generation to constrain LLM hallucinations \cite{reed2025augmented} and utilizing semi-supervised learning with curriculum strategies \cite{ijcaiWu24Semisupervised} to enhance activity cliff estimation.
To mitigate hallucinations and enhance activity cliff estimation, recent works employ strategies from optimized prompting \cite{reed2025augmented} to semi-supervised curriculum learning \cite{ijcaiWu24Semisupervised}.
In addition, frameworks seek to enrich molecular representations by using LLM-generated textual explanations to guide representation learning via contrastive objectives \cite{wang2024efficient} or employing instruction tuning to align LLMs with molecular graph modalities \cite{yu2025collaborative}.
%TextMOL \cite{wang2024efficient} uses LLM-generated textual explanations to guide representation learning via contrastive objectives, while LlaMo \cite{yu2025collaborative} employs instruction tuning to align LLMs with molecular graph modalities.
Despite these methodological advancements, these approaches predominantly focus on enhancing LLMs through adding views, tuning prompts, or augmenting data, without addressing the fundamental topological mismatch of the continuous embedding space. 

\section{Method}
We demonstrate MolEvolve in this chapter. As illustrated in Figure~\ref{fig:framework_arch}, MolEvolve reframes molecular engineering as a directed search problem over an executable symbolic chemical space. 
\begin{figure*}[t]
    \centering
    \includegraphics[width=0.95\textwidth]{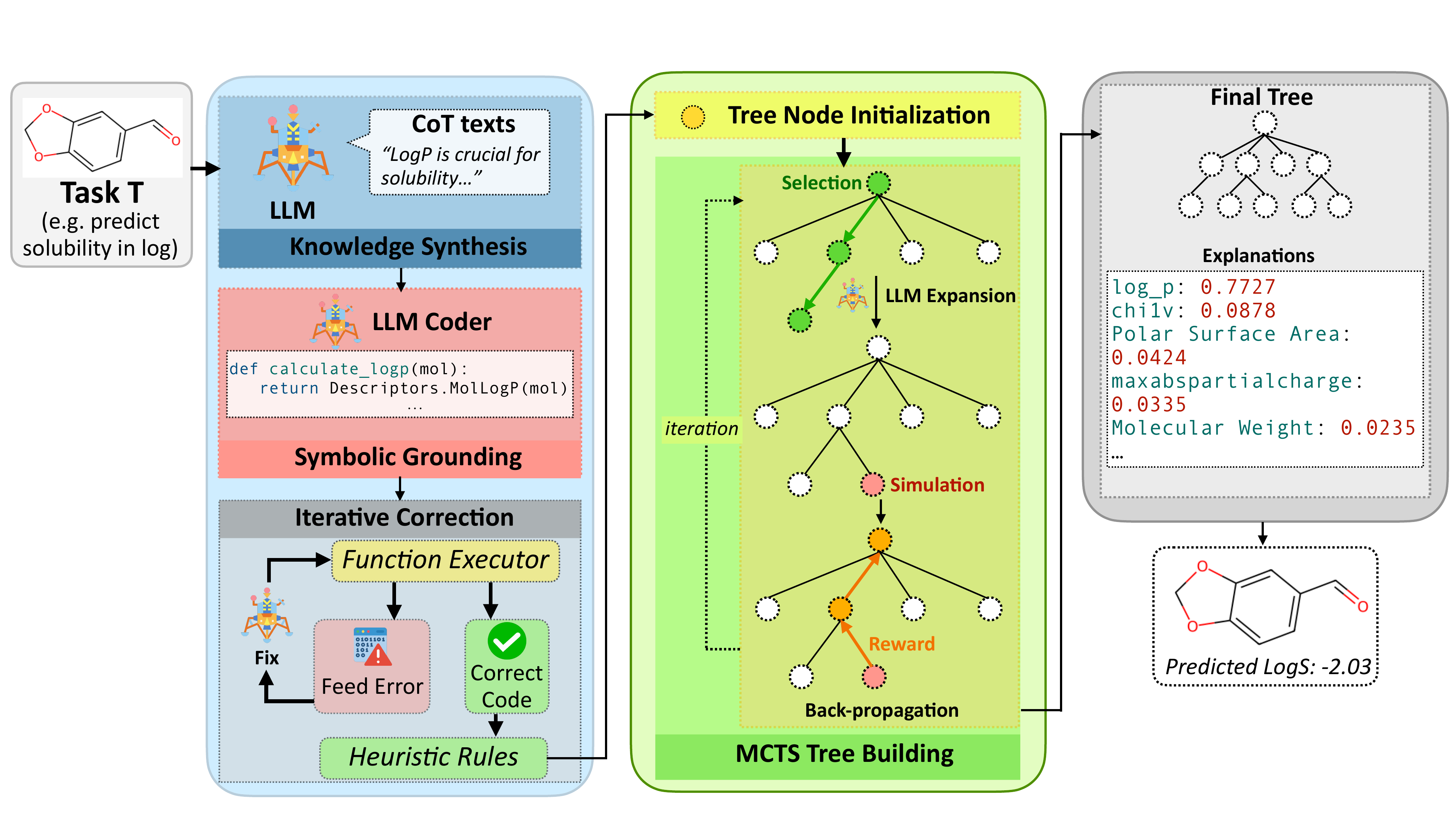}
    \caption{The overall architecture of MolEvolve. Phase 1 (Cold Start) distills domain knowledge into executable heuristic rules via symbolic grounding and self-correction. Phase 2 (LLM-MCTS) utilizes these rules to initialize an evolutionary search tree, where the LLM acts as molecule operator to guide selection and expansion within a rigorous verification loop.}
    \label{fig:framework_arch}
\end{figure*}

\subsection{Cold-start for Symbolic Knowledge Distillation}
\textbf{Overview.} Exploring the discrete and large molecular space presents a fundamental challenge for heuristic-free search algorithms. In particular, standard MCTS suffers from prohibitive sample complexity during its initialization phase due to the absence of effective guidance.
To address this limitation, the cold start mechanism introduces informative priors. Rather than discovering chemical rules from scratch via trial-and-error, our approach explicitly encodes these rules into the search process.
As illustrated in Figure~\ref{fig:framework_arch}, our cold start mechanism operates as a symbolic knowledge distillation process. It consists of three stages: (1) \textbf{Knowledge Synthesis}, where abstract chemical rules are analyzed by LLMs; (2) \textbf{Symbolic Grounding}, where rules are translated into executable code; and (3) \textbf{Iterative Self-Correction}, which ensures the validity of the generated logic through execution feedback. \\
\textbf{Knowledge Synthesis.} Directly asking an LLM to write code often yields incorrect or irrelevant functions. To mitigate this, we first employ CoT \cite{wei2022chain} prompting strategy to synthesize text-based prior knowledge. Given the task $\mathcal{T}$, we use task-specific prompt $\mathcal{P}_0(\mathcal{T})$ to identify critical molecular substructures or physicochemical properties associated with the target label. 
Formally, this step produces a set of textual rules $\mathcal{K}_{text} = \text{LLM}(\mathcal{P}_0(\mathcal{T}))$. This textual representation ensures the subsequent code generation is grounded in chemical logic rather than mere syntax completion. An example can be viewed in Appendix~\ref{appendix:cold_start_example}. 
%\yang{should there be a prompt to transform task $\mathcal{T}$, such as $\mathcal{K}_{text} = \text{LLM}(\mathcal{P}_0(\mathcal{T}))$? In appendix, provide examples for $\mathcal{T}$, $\mathcal{K}_{text}$ and necessary prompt structure, and refer it here so that it is clear what you meant.}\xiangsen{Added prompt denotation.}

\textbf{Symbolic Grounding via Code Generation.} To make $\mathcal{K}_{text}$ 
%\yang{is this the same as $\mathcal{K}_{prior}$ mentioned in discussion?}\xiangsen{$\mathcal{K}_{prior}$ of discussion part has been deleted for better clarification}
usable by MCTS, we must ground natural language into symbolic representations. 
%We employ the LLM as coder to translate $\mathcal{K}_{text}$ into Python functions $\Phi$.
Given a molecule SMILES string $S$, for each rule $\mathcal{K}_{text}^{(i)} \in \mathcal{K}_{text}$, we employ LLM as coder to translate it into Python function $\phi_i \in \Phi$ with standard chemical libraries like RDKit. We denote this as $\Phi = \{ \phi_i = \text{LLM}(\mathcal{K}_{text}^{(i)}) \}_{\mathcal{K}_{text}^{(i)} \in \mathcal{K}_{text}}$.
% Specifically, we instruct the model to implement each rule in $\mathcal{K}_{text}$ using standard chemical libraries like RDKit. 
This grounded representation $\Phi$ enables rigorous mathematical operations in the subsequent search process. %\yang{add equation here.}

\textbf{Execution-based Iterative Self-Correction.} A critical challenge in LLM-based coding is the generation of non-executable buggy code. We propose a closed-loop self-correction mechanism. For each generated function $\phi_i$, we attempt to execute it. If an exception occurs, we capture the error trace and feed it back to the LLM:\begin{equation}\phi_i^{t+1} \leftarrow \text{LLM}(\phi_i^t, e, \mathcal{P}_{fix})\end{equation}where $\mathcal{P}_{fix}$ is a rectification prompt. This loop repeats until $\phi_i$ executes successfully or a retry limit is reached.

\subsection{LLM-guided Evolutionary Search}
To evolutionarily explore the chemical space, we adopt a modified MCTS framework that integrates symbolic knowledge distilled from the cold start phase. Unlike conventional MCTS approaches that rely on random rollouts or purely data-driven value networks, our framework leverages LLMs to guide the search process for ensuring both chemical validity and boosting search space. %We define a search state $s$ as a tuple $(G, \mathcal{F})$, where $G$ denotes the molecular graph and $\mathcal{F}$ represents the accumulated set of features or constraints. 
We define a search state $s$ as a tuple of features $\mathcal{F}$ (in molecule prediction) or a molecular SMILES string $S$ (in molecular optimization). Importantly, the initial state $s_0$ is constructed with the symbolic functions $\Phi$ generated during the cold start phase, which provides an initialization with domain-specific inductive knowledge. 
%\yang{what is the exactly meaning for "augmented" here?  }
Our MCTS follows the standard four-stage procedure: \textit{Selection}, \textit{Expansion}, \textit{Simulation}, and \textit{Backpropagation}. We introduce key modifications to the \textit{Expansion} and \textit{Simulation} stages to explicitly incorporate symbolic knowledge.
A detailed formal comparison of detailed formulations of MCTS stage for tasks selected in our paper is provided in Appendix \ref{appendix:mcts_definitions} (see Table \ref{tab:mcts_definition_appendix}).

\textbf{Selection.}
We adopt the Upper Confidence Bound for Trees (UCT) strategy to traverse the tree from the root to a leaf node, balancing exploration and exploitation:
\begin{equation}
\label{eq:uct}
a_t = \arg\max_{a \in \mathcal{A}(s)} 
\left(
Q(s,a) + c \cdot \sqrt{\frac{\ln N(s)}{N(s,a)}}
\right),
\end{equation}
where $Q(s,a)$ is the estimated value of action $a$ in state $s$, $N(s)$ is the visit count of state $s$, $N(s,a)$ is the visit count of action $a$, and $c$ is an exploration constant.

\textbf{LLM-Guided Node Expansion.}
Instead of exhaustively enumerating all possible chemical modifications, we employ the LLM as a generative policy network conditioned on the symbolic priors. 
% Instead of exhaustively enumerating all possible chemical modifications or feature combinations, which is computationally infeasible, we employ the LLM as a generative policy network during the expansion phase.
% Given a leaf state $s_L$, the LLM is prompted with the textual diagnosis $D$ \yang{confusing.. no $D$ is found in cold-start phase. notation needs to be consistent.} derived in the cold start phase (e.g., specific structural liabilities or optimization directions) to propose a set of high-potential actions. 
Given a leaf state $s_L$, the LLM is prompted with the textual rules $\mathcal{K}_{text}$ derived in the cold start phase (e.g., specific structural liabilities or optimization directions) to propose a set of high-potential actions. 
This design ensures that the expansion is not a random exploration but a focused execution of the diagnostic logic established in cold start phase, thereby integrating the cold-start knowledge directly into the evolutionary search loop.
% Given a leaf state $s_L$, the LLM proposes a limited set of high-potential actions $\mathcal{A}_{\mathrm{LLM}}(s_L)$.\yang{is this part similar or same to 4.1? It is better than 4.1 is still part of the 4.2 to provide a holistic solution.}

\textbf{Simulation.}
Instead of performing deep Monte Carlo rollouts which are computationally expensive, we immediately evaluate the expanded node using the composite reward function. The reward function integrates task-specific objectives with symbolic heuristic guidance from $\Phi$. It consists of three distinct objectives: the primary task performance ($R_{\text{task}}$), the alignment with symbolic priors ($R_{\text{heuristic}}$), and structural constraints ($R_{\text{penalty}}$).
The reward for a state $s$ is defined as:
\begin{equation}
\label{eq:reward}
R(s) = R_{\text{task}}(s) 
+ \lambda \cdot R_{\text{heuristic}}(s, \Phi)
+ \gamma \cdot R_{\text{penalty}}(s),
\end{equation}
%\yang{$\gamma$ is already used in Section 3. fix notations.}
%\xiangsen{Fixed.}
This composite reward formulation enables MCTS to navigate sparse reward landscapes by exploiting dense heuristic feedback provided by the cold start mechanism. 
$R_{\text{task}}(s)$ captures the primary task objective, such as validation performance (RMSE or AUC) for prediction tasks or molecular property scores (e.g., QED, LogP). 
$R_{\text{heuristic}}(s, \Phi)$ measures the symbolic alignment with the cold-start priors. We formulate this term as the cumulative execution score of the generated symbolic rules $\Phi$ in the cold start phase. 
For a set of symbolic functions $\Phi = \{\phi_1, \dots, \phi_k\}$, the alignment score is defined as the weighted sum of satisfied rules:
\begin{equation}
    \label{eq:alignment}
    R_{\text{heuristic}}(s, \Phi) = \sum_{\phi_i \in \Phi} \mathbb{I}(\phi_i(s) = \text{True}),
\end{equation}
where $\mathbb{I}(\cdot)$ is the indicator function, $\phi_i(s)$ represents the execution of the $i$-th Python rule generated by the LLM (e.g., checking for specific pharmacophores).
% $R_{\text{heuristic}}(s, \Phi)$ measures alignment with the cold start symbolic priors. For optimization, this term is computed by executing the symbolic rules $\phi_i \in \Phi$ on the molecule (e.g., detecting desired pharmacophores).\yang{still unclear, write out the exact equation of "alignment". This looks like an important part to link 4.1 and 4.2 together, so that 4.1 can be considered as an intermediate module in 4.2 as well} For prediction, it reflects the interpretability and relevance of the selected features. 
$R_{\text{penalty}}(s)$ enforces constraints, such as penalizing low structural similarity to the initial molecule (to preserve the scaffold) or chemically invalid structures. $\lambda$ and $\gamma$ are hyperparameters that balance data-driven signals and symbolic priors.

\textbf{Backpropagation.}
The computed reward $R(s)$ is propagated back through the search tree, updating the $Q$-values and visit counts of all ancestor nodes.
This feedback mechanism allows the search to progressively concentrate on the most promising regions of the chemical space, as identified through the synergy between LLM reasoning and rigorous simulation.

\section{Experiment}
\subsection{Experimental Settings}
\textbf{Datasets and Tasks.} We evaluate MolEvolve on two fundamental molecular tasks: (1) \textbf{Property Prediction}, utilizing five benchmarks from MoleculeNet \cite{wu2018moleculenet}, including regression tasks (ESOL, Lipophilicity) and classification tasks (BACE, BBBP, HIV). (2) \textbf{Property Optimization}, using the ChemCoTBench \cite{li2025chemcotbench} benchmark to evaluate the model's ability to improve molecular properties. Results are averaged over 3 independent runs. We follow \cite{guo2023can} and split the dataset into an 80/10/10\% for training, validation and test sets. 

\textbf{Baselines.} We compare our approach against three categories of methods. \textbf{GNN-based Methods:} Specialized graph models including GIN \cite{xu2018gin} and Graphformer \cite{yang2021graphformers}. \textbf{LLM Baselines:} Including RAG-based baselines, like Automolco \cite{zhang2025automated}, MolRAG \cite{xian2025molrag}, and fine-tuned LLMs like Attrilens-mol \cite{lin2025attrilens}. \textbf{Strong Reasoning Models:} Evaluation of the internal reasoning capabilities of proprietary and open-source LLMs.
%, such as o3-mini, DeepSeek-R1, and GPT-4o. 

\textbf{Metrics.}
%For the property prediction task, we adopt standard metrics tailored to the nature of the target variables: for regression tasks, we use the Root Mean Square Error (RMSE) to measure the deviation between predicted and ground-truth values, where a lower RMSE indicates higher precision; for classification tasks, we utilize the ROC-AUC to evaluate the discriminative ability, with higher values representing better performance. 
For property prediction tasks, we adopt Root Mean Square Error (RMSE) (lower value for better performance) for regression and ROC-AUC (higher value for better performance) for classification. 
For the property optimization task, we measure through two metrics: Improvement ($\Delta$) and Success Rate (SR\%). The Improvement metric calculates the average absolute increase in the target property achieved by the model starting from the initial molecule. The SR\% represents the percentage of test cases where the model successfully generates a valid molecule with a property value higher than that of the starting molecule. %This dual-metric approach ensures a balanced evaluation of the model's optimization magnitude and its consistency in navigating structural constraints.

\subsection{Main Results}
\begin{table*}[ht]
\centering
\caption{Comparative results for molecular property prediction on MoleculeNet benchmarks. For regression tasks (ESOL, Lipophilicity), we report RMSE (lower is better). For classification tasks (BACE, BBBP, HIV), we report ROC-AUC (higher is better). }
\label{tab:prediction_results}
\footnotesize
\begin{tabular}{llccccc}
\toprule
\multirow{2}{*}{\textbf{Base Model}} & \multirow{2}{*}{\textbf{Method}} & \multicolumn{2}{c}{\textbf{Regression (RMSE $\downarrow$)}} & \multicolumn{3}{c}{\textbf{Classification (ROC-AUC $\uparrow$)}} \\
\cmidrule(lr){3-4} \cmidrule(lr){5-7}
& & \textbf{ESOL} & \textbf{Lipo.} & \textbf{BACE} & \textbf{BBBP} & \textbf{HIV} \\
\midrule
\multirow{4}{*}{Qwen2.5-7B-Instruct} & Automolco \cite{zhang2025automated} & 1.0566 & 1.0854 & 0.6716 & 0.6102 & 0.6265 \\
& MolRAG \cite{xian2025molrag} & 1.7160 & 2.5412 & 0.6968 & 0.5608 & 0.6476 \\
& ATTRILENS-MOL \cite{lin2025attrilens} & 1.8420 & - & 0.6102 & 0.5810 & - \\
& \textbf{MolEvolve} & \textbf{0.6890} & \textbf{0.8527} & \textbf{0.7705} & \textbf{0.7461} & \textbf{0.7581} \\
\midrule
\multirow{3}{*}{GPT-4o-mini} & Automolco \cite{zhang2025automated} & 1.0062 & 1.0630 & 0.7390 & 0.6435 & 0.6583 \\
& MolRAG \cite{xian2025molrag} & 1.2355 & 1.7800 & 0.7762 & 0.6392 & 0.6823 \\
& \textbf{MolEvolve} & \textbf{0.6914} & \textbf{0.8055} & \textbf{0.7854} & \textbf{0.7268} & \textbf{0.7281} \\
\midrule
\multirow{3}{*}{GPT-4o} & Automolco \cite{zhang2025automated} & 0.8538 & 1.0591 & 0.7310 & 0.6637 & 0.6750 \\
& MolRAG \cite{xian2025molrag} & 1.4237 & 1.2967 & 0.7663 & 0.6271 & 0.6209 \\
& \textbf{MolEvolve} & \textbf{0.6869} & \textbf{0.8188} & \textbf{0.7875} & \textbf{0.7291} & \textbf{0.7682} \\
\midrule
\multirow{2}{*}{GNNs} & GIN \cite{xu2018gin} & 1.2182 & 1.5481 & 0.7326 & 0.6953 & 0.7450 \\
& Graphformer \cite{yang2021graphformers} & 0.9260 & 1.2367 & 0.7695 & 0.7012 & 0.7788 \\
\midrule
\multirow{2}{*}{Tree-based model} & XGBoost & 0.9364 & 0.9833 & 0.6428 & 0.6864 & 0.7020 \\
& Decision Tree & 0.8287 & 1.1135 & 0.6352 & 0.6037 & 0.5566 \\
\midrule
{Linear model} & Logistic Regression (LR) & 0.9970 & 1.0651 & 0.6334 & 0.6727 & 0.6571 \\
\bottomrule
\end{tabular}
\end{table*}

\begin{table}[t] 
    \centering
    \caption{Molecular property optimization results on ChemCoTBench. Improvement measures the average increase in target property, and SR\% (Success Rate) measures the percentage of instances with property improvement. In MolEvolve, the \textbf{Base Model} column specifies the backbone LLM integrated within the architecture.}
    \label{tab:optimization_results}
    \resizebox{\linewidth}{!}
    {
        \setlength{\tabcolsep}{3.5pt} %
        \begin{tabular}{lcccc}
            \toprule
            \multirow{2}{*}{\textbf{Base Model}} & \multicolumn{2}{c}{\textbf{LogP}} & \multicolumn{2}{c}{\textbf{QED}} \\
            \cmidrule(lr){2-3} \cmidrule(lr){4-5}
             & \textbf{$\Delta$} & \textbf{SR\%} & \textbf{$\Delta$} & \textbf{SR\%} \\
            \midrule
            
            % --- MolEvolve Section ---
            \multicolumn{5}{c}{\textit{\textbf{MolEvolve (Ours)}}} \\
            \midrule
            GPT-4o-mini & \textbf{1.925} & \textbf{85\%} & \textbf{0.264} & \textbf{55\%} \\
            Qwen2.5-7B-Instruct & \textbf{1.271} & \textbf{63\%} & \textbf{0.245} & \textbf{58\%} \\
            Qwen2.5-32B-Instruct & \textbf{2.126} & \textbf{82\%} & \textbf{0.258} & \textbf{60\%} \\
            
            % --- Baselines Section ---
            \midrule
            \multicolumn{5}{c}{\textit{\textbf{Baseline Models}}} \\
            \midrule
            %o3-mini & 0.302 & 70\% & 0.287 & 61\% \\
            GPT-5 & 0.257 & 53\% & 0.204 & 66\% \\
            Qwen3-235B-A22B-think & 0.050 & 40\% & 0.020 & 24\% \\
            DeepSeek-R1 & 0.371 & 76\% & 0.074 & 71\% \\
            DeepSeek-V3 & 0.102 & 28\% & 0.090 & 46\% \\
            BioMistral-7B \cite{labrak2024biomistral} & -0.360 & 18\% & -0.295 & 15\% \\
            BioMedGPT-7B \cite{luo2023biomedgpt} & 0.012 & 2\% & 0.000 & 0\% \\
            \bottomrule
        \end{tabular}
    }
\end{table}

\textbf{Property Prediction.}
The results in Table \ref{tab:prediction_results} illustrate the predictive performance of MolEvolve across diverse tasks. MolEvolve consistently achieves a significant reduction in error rates for regression tasks. On the ESOL dataset, MolEvolve maintains an RMSE around 0.69, which is much lower than that of LLM-based baselines like MolRAG \cite{xian2025molrag}. 
Baselines suffer from numerical hallucinations due to the representation-precision paradox.
Their reliance on semantic approximations over-smooths the molecular manifold, which may trigger giant bias in chemical manifolds. 
We resolve this by grounding executable code. By generating deterministic code rather than stochastic values, our framework ensures predictions adhere to deterministic chemical logic.
%These baselines often suffer from numerical hallucinations because LLMs treat continuous physical properties as discrete tokens. Our approach solves this representation-precision paradox. It translates molecular insights into executable code instead of predicting numbers directly. This strategy ensures that the final predictions follow deterministic chemical logic. 
Notably, MolEvolve with Qwen2.5-7B-Instruct outperforms GNNs such as Graphformer. This outcome indicates that the symbolic features discovered through search are more effective than high-dimensional black-box embeddings. In classification tasks, our method also demonstrates strong performance on diverse datasets. %The steady gains across different base models prove the robustness of our evolutionary search mechanism.

\textbf{Property Optimization.}
Table \ref{tab:optimization_results} presents the performance of MolEvolve in molecular optimization tasks. Our method shows a substantial advantage over strong LLMs in the optimization tasks. 
For LogP optimization, MolEvolve achieves an improvement score nearly 4 times higher than that of DeepSeek-R1 with Qwen2.5-7B-Instruct as the base model. This significant gap reveals the limitations of implicit reasoning of LLMs. 
Reasoning LLMs, like DeepSeek-R1, rely on their internal thinking path. They lack the ability to backtrack from failed structural modifications in the vast combinatorial chemical space. In contrast, our MCTS engine performs explicit lookahead planning. It systematically explores multiple search branches and verifies each step through external tools. %Our framework effectively transforms the optimization problem into a manageable evolutionary search process.
A key observation is that our search-centric approach effectively bridges the capability gap between LLMs of different scales. With the full MCTS framework, the Qwen2.5-7B-Instruct ($\Delta=1.271$ and Success Rate 63\% on LogP) is able to outperform strong LLMs like GPT-5 and DeepSeek-V3.
%like o3-mini. 
This result demonstrates that MolEvolve effectively transforms the optimization problem into a manageable evolutionary search process.

\begin{figure}[htbp]
    \centering
    \includegraphics[width=\linewidth]{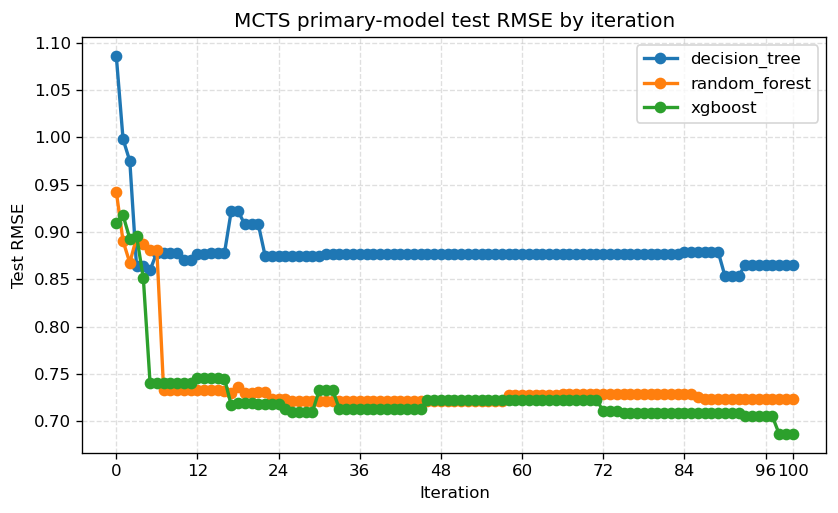}
    \caption{Search Efficiency and Model Adaptability Analysis: Test RMSE trajectories across 100 iterations. To verify that the evolved symbolic features are model-agnostic, we employ three distinct primary models as downstream evaluators. In each iteration, the MCTS engine generates a candidate feature set, which is then frozen to train these evaluators for scoring.}
    \label{fig:mcts_convergence}
\end{figure}

\textbf{Analysis of Search Progress.}
%\yang{this part should go in to 5.2, after  the table.}\xiangsen{Fixed.} 
To further investigate the effectiveness of the searching module of MolEvolve, we visualize the evolutionary trajectories for both property prediction (Figure \ref{fig:mcts_convergence}) and molecular optimization (Figure \ref{fig:logp_convergence}).
The trajectories are across $T=100$ and $T=30$ MCTS iterations, respectively.
% we visualize the test RMSE trajectory across $T=100$ MCTS iterations (see Figure \ref{fig:mcts_convergence}).
%\yang{add one figure for molecular optimization task}). \xiangsen{Added.}
The test set is held out and trajectory analyses report the performance of the optimized model on the held-out test set only after the search concludes. 
%Crucially, we distinguish between two symbolic concepts: the symbolic priors ($\mathcal{K}_{\text{prior}}$), which are the initial heuristic rules distilled from the LLM-driven cold start, and the symbolic feature set ($\mathcal{F}$), which is the dynamic set of executable descriptors accumulated by the MCTS engine.
To demonstrate that the accumulated knowledge is model-agnostic, we employ Random Forest, XGBoost, and Decision Tree as primary evaluators in property prediction. 
%To demonstrate that the accumulated feature sets $\mathcal{F}$ is model-agnostic, we use several machine learning models, including random forest, xgboost and decision tree as primary evaluator models to strictly assess the quality of the discovered symbolic priors generated by the MCTS engine. \yang{BTw, there are many similar terms defined as "symbolic": "symbolic features" "symbolic priors", which are confusing. Define clearly each term using notations such as $\mathcal{F}$, and be consistent! e.g., what are exactly "symbolic priors"?}
The results of both tasks demonstrate a rapid initial convergence in the first steps, which is facilitated by the LLM-driven cold start, and distinct 'step-wise' improvements. It distills essential chemical priors into an initial set of executable rules, providing a high-quality starting manifold.
For \textbf{Property Prediction}, the RMSE trajectory an exploration-breakthrough dynamic over 100 iterations. 
The initial plateau phase (e.g., iterations 20-90) corresponds to the MCTS engine navigating the combinatorial rule space (exploring $\sim 30$ candidates per step) to accumulate necessary feature precursors. 
Unlike greedy methods, MCTS tolerates this latency to identify synergistic combinations. 
The subsequent drop in RMSE signifies a synergistic breakthrough, where the interaction of a newly discovered feature with the accumulated set $\mathcal{F}$ (typically $|\mathcal{F}| \approx 20$) successfully resolves the "activity cliff".
The \textbf{Molecular Optimization} trajectory (Figure \ref{fig:logp_convergence}) displays a more rapid optimization pattern. 
All backbone LLMs exhibit a steep initial ascent, which confirms that distilled rules from Cold Start phase
%($\mathcal{K}_{\text{prior}}$) 
provide immediate, valid structural directives rather than random perturbations. 
Compared to property prediction task, convergence occurs early ($\sim 30$ iterations) in molecular optimization. 
It indicates that constructive structural editing yields more direct feedback than the combinatorial feature discovery required for prediction. 

\begin{figure}[htbp]
    \centering
    \includegraphics[width=\linewidth]{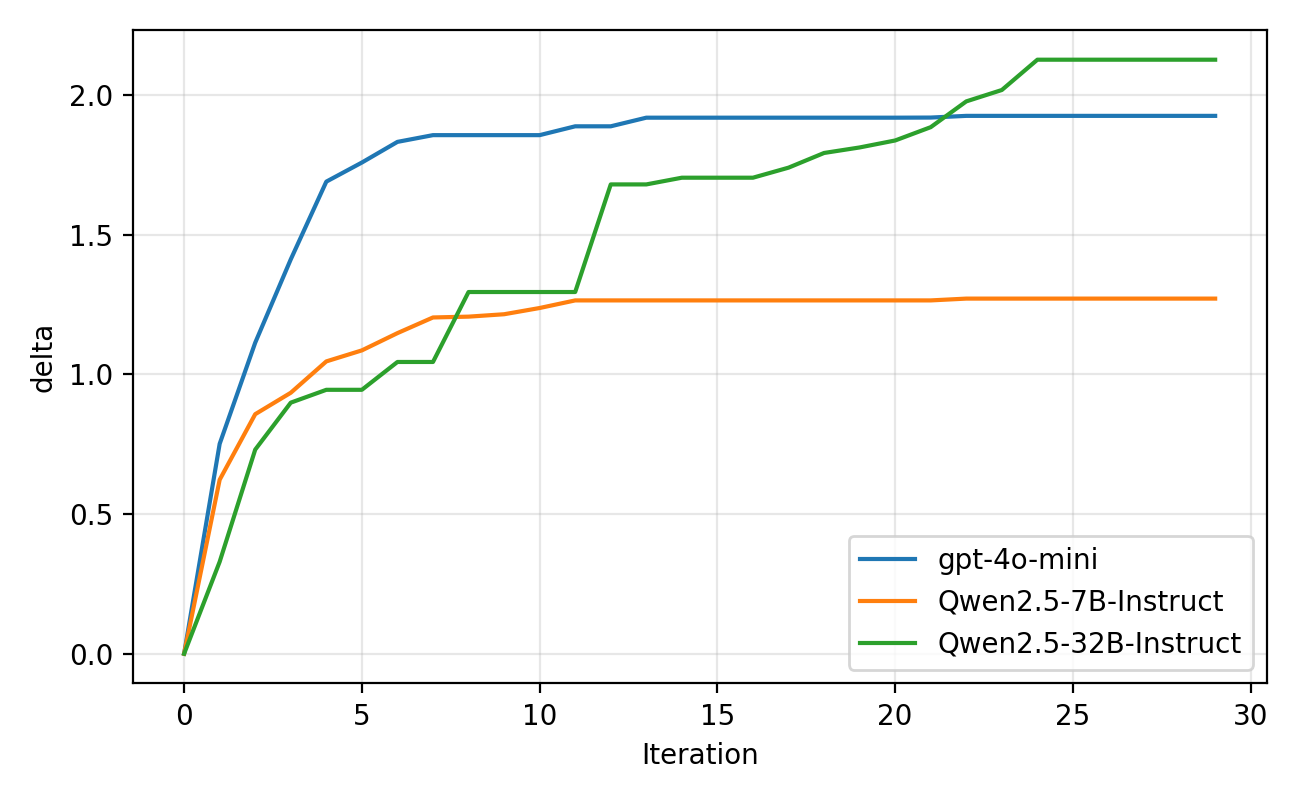}
    \caption{An Example of Evolutionary Trajectory of Molecular Optimization(logP).}
    \label{fig:logp_convergence}
\end{figure}

\textbf{Case Study of Real-Time Explanation.}
%\textcolor{red}{[Todo]}
%\textcolor{blue}{Add a figure illustration here.}
%\yang{we should add an example for "activity cliff" prediction too, otherwise the motivation and theoretical analysis has no direct support. Is the example in Figure 1 on QED a real example that can be demonstrated here? }\xiangsen{Added an example for case study in the Appendix of QED.}
To address the theoretical challenge of "activity cliffs" and verify MolEvolve's ability to navigate complex chemical manifolds, we analyze the optimization of property logP and QED (see Figure~\ref{fig:case_study_trace} and Figure~\ref{fig:qed_full_trace} of Appendix~\ref{appendix:case_study_details} for the detailed trace). 
First, we analyze the optimization of LogP (Figure~\ref{fig:case_study_trace}). Initially, the cold-start phase identifies structural liabilities, and distills symbolic prescription. 
As illustrated in Figure~\ref{fig:case_study_trace}, our MCTS engine systematically explores the structural space. 
%Unlike vanilla LLMs that often 
Unlike proposing chemically invalid molecules or stuck in local optima, MolEvolve employs the Function Executor to prune non-physical paths. The finalized molecule, \texttt{CC=C1NC(C(C)C)C(=O)N(C)C1Cc1ccc(OC)cc1C}, demonstrates a drastic improvement of $\Delta \text{LogP} = 3.60$, nearly 5 times higher than the improvement achieved by deep-reasoning LLMs like DeepSeek-R1. 
This trajectory confirms that MolEvolve follows chemical logic with real-time traceable interpretability.
In addition, we examine the search trajectory for QED (Figure~\ref{fig:qed_full_trace}). 
Our MCTS engine employs explicit look-ahead planning that prevents stagnating in the local optima.
As demonstrated in the trace, the search initially enters a sub-optimal branch. 
Then it executes a backpropogation to the parent node and identifies a synergistic replacement (Nitro $\to$ Isopropyl). 
This symbolic edit successfully resolving the property bottleneck (QED $\to$ 0.831), demonstrating it triggers the "activity cliff". 
This case reinforces our central claim again that, by grounding reasoning in an executable search tree, MolEvolve effectively decouples physical precision from semantic smoothness and offers \textit{real-time interpretability} that every structural edit corresponds to a specific heuristic rule, and creating a transparent trail for chemists.
%Crucially, this process offers \textit{ad-hoc interpretability}: every structural edit corresponds to a specific heuristic rule, and creating a transparent audit trail for chemists. This case reinforces our central claim: by grounding LLM reasoning into an executable and searchable symbolic space, we effectively resolve the representation-precision paradox.

\subsection{Ablation Studies}
%In this section, we conduct comprehensive ablation studies to decouple the performance gains contributed by hyperparameters and different modules.
%across both prediction and optimization tasks.
\begin{table}[h]
    \centering
    \caption{Ablation study for molecular property prediction.}
    \label{tab:prediction_ablation}
    \small
    {
        \setlength{\tabcolsep}{2.5pt} 
        \begin{tabular}{lccccc}
            \toprule
            \multirow{2}{*}{\textbf{Module}} & \multicolumn{2}{c}{\textbf{RMSE $\downarrow$}} & \multicolumn{3}{c}{\textbf{ROC-AUC $\uparrow$}} \\
            \cmidrule(lr){2-3} \cmidrule(lr){4-6}
             & \textbf{ESOL} & \textbf{Lipo} & \textbf{BACE} & \textbf{BBBP} & \textbf{HIV} \\
            \midrule
            
            % --- Qwen2.5-7B ---
            \multicolumn{6}{c}{\textit{\textbf{Base Model: Qwen2.5-7B-Instruct}}} \\
            \midrule
            Vanilla LLM & 1.984 & 2.251 & 0.590 & 0.377 & 0.621 \\
            w/o Cold Start + MCTS & 0.831 & 0.959 & 0.671 & 0.691 & 0.702 \\
            \textbf{w. cold start + MCTS} & \textbf{0.689} & \textbf{0.853} & \textbf{0.771} & \textbf{0.746} & \textbf{0.758} \\
            
            % --- GPT-4o-mini ---
            \midrule
            \multicolumn{6}{c}{\textit{\textbf{Base Model: GPT-4o-mini}}} \\
            \midrule
            Vanilla LLM & 1.873 & 1.763 & 0.702 & 0.443 & 0.677 \\
            w/o Cold Start + MCTS & 0.820 & 0.922 & 0.739 & 0.678 & 0.689 \\
            \textbf{w. cold start + MCTS} & \textbf{0.691} & \textbf{0.806} & \textbf{0.785} & \textbf{0.727} & \textbf{0.728} \\
            
            % --- GPT-4o ---
            \midrule
            \multicolumn{6}{c}{\textit{\textbf{Base Model: GPT-4o}}} \\
            \midrule
            Vanilla LLM & 1.830 & 1.524 & 0.489 & 0.499 & 0.514 \\
            w/o Cold Start + MCTS & 0.737 & 0.882 & 0.750 & 0.681 & 0.714 \\
            \textbf{w. cold start + MCTS} & \textbf{0.687} & \textbf{0.819} & \textbf{0.788} & \textbf{0.729} & \textbf{0.768} \\
            \bottomrule
        \end{tabular}
    }
    \vspace{0.2cm}
    \caption{Ablation study on ChemCoTBench optimization task. $\Delta$ and SR\% represent the average property improvement and success rate, respectively.}
    \label{tab:ablation_study_chemcot}
    {
        \setlength{\tabcolsep}{3.5pt} 
        \small
        \begin{tabular}{lcccc}
            \toprule
            \multirow{2}{*}{\textbf{Modules}} & \multicolumn{2}{c}{\textbf{LogP}} & \multicolumn{2}{c}{\textbf{QED}} \\
            \cmidrule(lr){2-3} \cmidrule(lr){4-5}
             & \textbf{$\Delta$} & \textbf{SR\%} & \textbf{$\Delta$} & \textbf{SR\%} \\
            \midrule
            
            % --- GPT-4o-mini ---
            \multicolumn{5}{c}{\textit{\textbf{Base Model: GPT-4o-mini}}} \\
            \midrule
            Vanilla LLM & 0.148 & 24\% & 0.186 & 31\% \\
            + Cold Start & 0.780 & 42\% & 0.254 & 36\% \\
            \textbf{+ Full MCTS} & \textbf{1.925} & \textbf{85\%} & \textbf{0.264} & \textbf{55\%} \\
            
            % --- Qwen2.5-7B ---
            \midrule
            \multicolumn{5}{c}{\textit{\textbf{Base Model: Qwen2.5-7B-Instruct}}} \\
            \midrule
            Vanilla LLM & -0.662 & 11\% & 0.146 & 28\% \\
            + Cold Start & -0.551 & 13\% & 0.178 & 35\% \\
            \textbf{+ Full MCTS} & \textbf{1.271} & \textbf{63\%} & \textbf{0.245} & \textbf{58\%} \\
            
            % --- Qwen2.5-32B ---
            \midrule
            \multicolumn{5}{c}{\textit{\textbf{Base Model: Qwen2.5-32B-Instruct}}} \\
            \midrule
            Vanilla LLM & -0.032 & 17\% & 0.024 & 15\% \\
            + Cold Start & 0.031 & 37\% & 0.138 & 51\% \\
            \textbf{+ Full MCTS} & \textbf{2.126} & \textbf{82\%} & \textbf{0.258} & \textbf{60\%} \\
            \bottomrule
        \end{tabular}
    }
\end{table}

\textbf{Necessity of Cold Start and Evolutionary Search.}
The comprehensive ablation studies (Table \ref{tab:prediction_ablation} and Table \ref{tab:ablation_study_chemcot}) demonstrate substantial performance gains of different modules across both property prediction and optimization tasks. 
A full analysis of each benchmark is demonstrated in Appendix~\ref{appendix:detail_ablation}.
Vanilla LLMs consistently struggle across both prediction and optimization tasks, which fall short in the \textit{representation-precision paradox}. 
%Their continuous semantic embeddings inherently over-smooth the target molecule manifold, which fails to capture "activity cliffs".
The integration of the Cold Start phase effectively mitigates this by anchoring the model with domain-specific symbolic priors, transforming stochastic hallucinations into deterministic, executable calculations.
However, the most profound performance gains are driven by the Full MCTS framework.
The results indicate that while Cold Start phase provides a high-quality starting manifold, they are insufficient on their own.
Navigating the combinatorial chemical space requires the MCTS engine’s capacity for explicit look-ahead planning and backtracking.
Ultimately, MolEvolve succeeds by decoupling physical precision from semantic generation, which is governed by verifiable evolutionary search rather than random sampling.

\textbf{Analysis of Hyperparameters.}
%Table~\ref{tab:hyperparams}.
To understand the relationship between symbolic reasoning and structural exploration, we conduct a sensitivity analysis on the two key hyperparameters governing the reward function: the \textbf{heuristic guidance weight ($\lambda$)}, which scales the influence of LLM-generated symbolic priors, and the \textbf{similarity penalty weight ($\gamma$)}, which penalizes the structural deviation from the source scaffold. The results for LogP and QED optimization are presented in Figure~\ref{fig:hyperparams_heatmap}.
\begin{figure}[htbp]
    \centering
    \includegraphics[width=\linewidth]{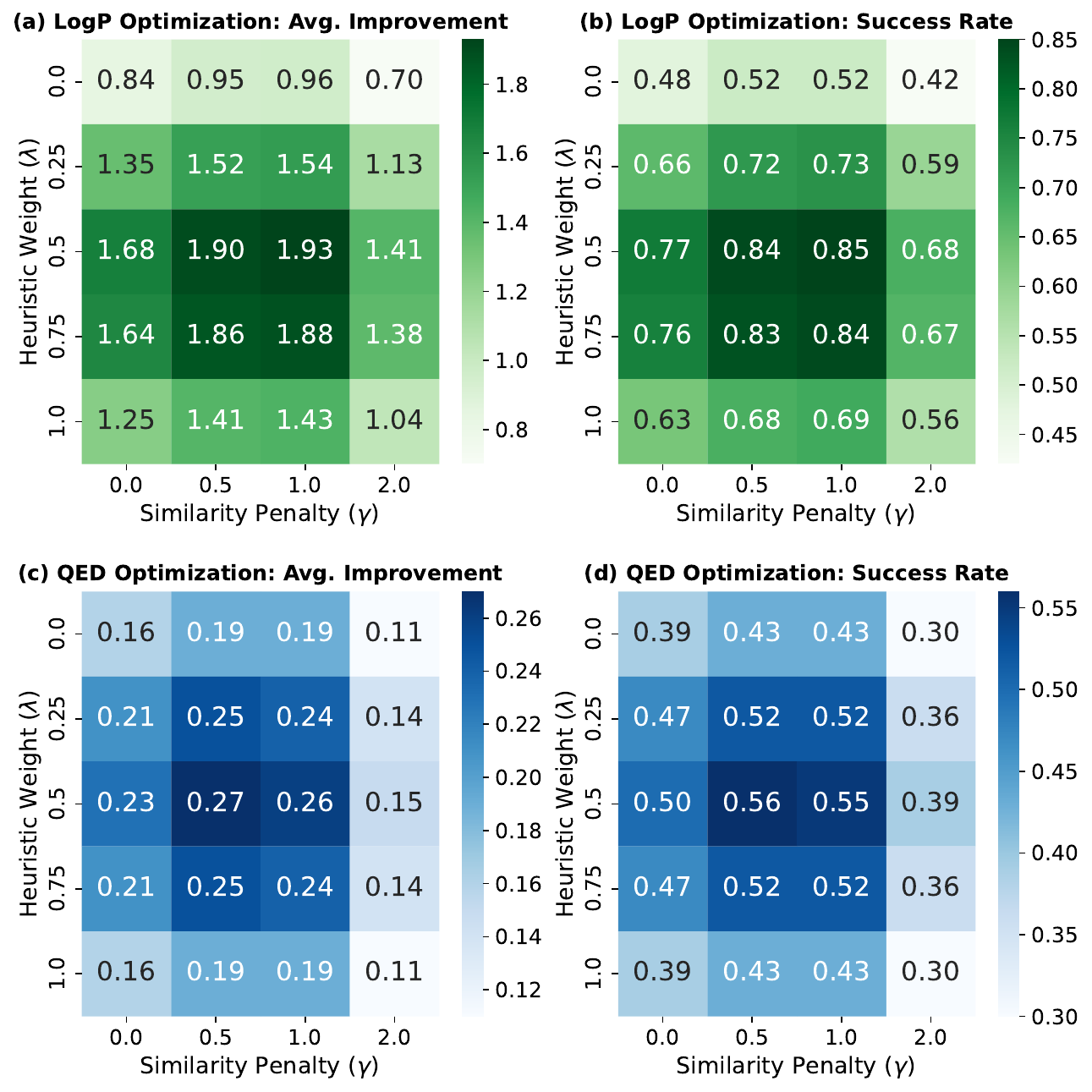} 
    \caption{\textbf{Hyperparameter Sensitivity Analysis Heatmaps.} We visualize the \textit{Average Improvement} (Left) and \textit{Success Rate} (Right) for LogP (Top) and QED (Bottom) tasks. Darker colors indicate superior performance.}
    \label{fig:hyperparams_heatmap}
\end{figure}
We observe a distinct convex trend across both tasks as $\lambda$ increases. 
At low $\lambda$ values, the system degenerates into a quasi-random search, which lacking the high-level planning required to navigate complex chemical spaces. 
Conversely, an excessively high $\lambda$ forces the MCTS to rigidly adhere to LLM-generated heuristics. This effectively suppresses the capability, reducing the system to a greedy follower of the LLM's potentially hallucinated advice. 
Overall, the optimal strategy is a synergy where the LLM provides broad strategic direction, while the MCTS engine retains sufficient autonomy to verify and refine these suggestions against physical reality.
The similarity penalty $\gamma$ acts as a regularizer to balance exploration and scaffold preservation. 
%The similarity penalty $\gamma$ acts as a crucial driver for escaping local optima, which is a necessity for addressing the "activity cliff" challenge. 
Without this penalty, the model tends to make trivial modifications that preserve the scaffold, but fail to trigger "activity cliff". 
However, this is a trade-off; an aggressive penalty ($\gamma=2.0$) imposes an excessive cost on structural modifications, and causes a sharp drop in success rate. 
% However, this is a trade-off; an aggressive penalty ($\gamma=2.0$) prunes too many chemically potential valid candidates \yang{again, without the clear definition of the penalty term, it is hard to understand why this "prunes too many valid candidates".}\xiangsen{Modified.} and causes a sharp drop in success rate. 
Interestingly, QED optimization task favors a slightly less constraint compared to LogP optimization task. This phenomenon can be attributed to reflecting the higher sensitivity of drug-likeness scores to more drastic structural modifications.

\section{Conclusion}
We introduce MolEvolve that reformulates molecular optimization as a dynamic search over an explicit, executable symbolic space. By distilling chemical priors via a cold-start mechanism and rigorously verifying evolved rules through a closed-loop search engine, our approach effectively bridges the gap between the semantic manifold of LLMs and the manifold of molecular properties.
Empirical results across property prediction and optimization tasks demonstrate that MolEvolve not only outperforms baselines but also provides human-readable chemical insights. This shift from implicit neural representations to explicit symbolic evolution offers a new way for scientific discovery. Future work will extend this paradigm to multi-objective AI drug design and explore the integration of laboratory automation.

\section*{Impact Statement}
This paper highlights that our proposed MolEvolve effectively addresses the Representation-Precision Paradox in AI-driven automated molecular discovery. By leveraging the symbolic priors and MCTS look-ahead planning, MolEvolve outperforms existing black-box deep learning paradigms with transparent, human-readable reasoning chains. This shift from implicit continuous fitting to explicit symbolic operations fosters greater trust in AI-assisted decision-making.

\nocite{langley00}

\bibliography{example_paper}
\bibliographystyle{icml2026}

%%%%%%%%%%%%%%%%%%%%%%%%%%%%%%%%%%%%%%%%%%%%%%%%%%%%%%%%%%%%%%%%%%%%%%%%%%%%%%%
%%%%%%%%%%%%%%%%%%%%%%%%%%%%%%%%%%%%%%%%%%%%%%%%%%%%%%%%%%%%%%%%%%%%%%%%%%%%%%%
% APPENDIX
%%%%%%%%%%%%%%%%%%%%%%%%%%%%%%%%%%%%%%%%%%%%%%%%%%%%%%%%%%%%%%%%%%%%%%%%%%%%%%%
%%%%%%%%%%%%%%%%%%%%%%%%%%%%%%%%%%%%%%%%%%%%%%%%%%%%%%%%%%%%%%%%%%%%%%%%%%%%%%%
\newpage
\appendix
\onecolumn

\section{Formal Definitions of MCTS Components}
\label{appendix:mcts_definitions}
%\textcolor{blue}{add description}
In this section, we provide the rigorous mathematical formulation of our LLM-Augmented MCTS framework. In this paper, MolEvolve unifies two distinct tasks: \textit{Property Prediction} and \textit{Property Optimization}. To deploy MCTS for molecular discovery, we define the search environment: the State Space $\mathcal{S}$, Action Space $\mathcal{A}$, and the Reward Mechanism $\mathcal{R}$ that guide the tree search. It constructs a specific MCTS formulation for each, as detailed below. The proposed search process (Algorithm \ref{alg:general_mcts}) follows a four-stage cycle.
In each iteration, the search navigates to a leaf node via UCT (Line 4) and employs the LLM as a policy prior to propose high-potential symbolic actions (Line 6). Crucially, rigorous verification (Line 8) serves as a filter: only chemically valid states trigger reward computation and backpropagation, while invalid hallucinations are immediately pruned via heavy penalties.

\begin{table*}[htbp]
\centering
\caption{Formal definitions of MCTS components across molecular tasks in our experiments.}
\label{tab:mcts_definition_appendix}
\footnotesize
\begin{tabular}{p{2.2cm}p{6.3cm}p{6.3cm}}
\toprule
\textbf{Component} & \textbf{Property Prediction (Feature Evolution)} & \textbf{Property Optimization (Structural Evolution)} \\
\midrule
\textbf{State ($s$)} & A tuple of symbolic feature names $\mathcal{F} = (f_1, f_2, \dots, f_k)$ representing a selected descriptor set. & A validated molecular SMILES string $S$ representing a specific chemical structure. \\
\textbf{Action ($a$)} & Selection of a novel RDKit-computable descriptor $f'$ from the evolved pool to augment the feature set $\mathcal{F} \rightarrow \mathcal{F}'= (f_1, f_2, \dots, f_k, f')$. & A multi-stage structural transformation $S \rightarrow S'$ involving LLM-driven critique and rewriting. \\
\textbf{Expansion} & LLM identifies high-potential descriptors by analyzing physicochemical relevance and existing feature importance. & LLM acts as molecule operator, performing structural diagnosis and modification proposals. \\
\textbf{Reward ($R$)} & Validation metric (e.g., negative RMSE or AUC) augmented by a normalized test signal bonus for exploration stability. & Weighted sum of task score (e.g., LogP/QED), cold-start heuristic scores, and similarity-based penalties. \\
\textbf{Verification} & Deterministic RDKit-based feature computation; ensures all descriptors in $\mathcal{F}$ are executable. & Enforced by RDKit sanitization and valence checks; invalid SMILES are assigned a heavy penalty ($R=-100$). \\
\bottomrule
\end{tabular}
\end{table*}

\begin{algorithm}[H]
\caption{LLM-guided Evolutionary Search}
\label{alg:general_mcts}
\begin{algorithmic}[1]
\STATE \textbf{Input:} Initial state $s_0$, Textual Rules $\mathcal{K}_{text}$, Symbolic Functions $\Phi$, LLM $\mathcal{M}$, Search Iterations $N_{iter}$
\STATE \textbf{Output:} Optimal state $s^*$
\STATE $\mathbb{T} \leftarrow \{s_0\}$ \COMMENT{Initialize search tree with cold-start state}
\WHILE{Iterations $< N_{iter}$}
    \STATE $s_{leaf} \leftarrow \text{Selection}(\mathbb{T})$ \COMMENT{Maximize UCT value, see Eq.~\ref{eq:uct}} 
    
    \STATE \COMMENT{// Expansion conditioned on textual priors $\mathcal{K}_{text}$}
    \STATE $\mathcal{A}_{\text{LLM}} \leftarrow \mathcal{M}(s_{leaf}, \mathcal{K}_{text})$ \COMMENT{Propose actions matching chemical logic}
    
    \FOR{each proposed action $a \in \mathcal{A}_{\text{LLM}}$}
        \STATE $s_{new} \leftarrow \text{Apply\_Action}(s_{leaf}, a)$
        \STATE \textbf{if} $\text{IsValid}(s_{new})$ is \textbf{True} \textbf{then}
            \STATE \COMMENT{// Simulation with symbolic alignment reward}
            \STATE $R \leftarrow \text{Compute\_Reward}(s_{new}, \Phi)$ \COMMENT{Calculate $R(s)$ using Eq.~\ref{eq:reward}}
            \STATE $\text{Insert } s_{new} \text{ into } \mathbb{T}$
            \STATE $\text{Backpropagation}(s_{new}, R)$ \COMMENT{Update $Q$ and $N$ values}
        \STATE \textbf{else}
            \STATE Assign penalty $R_{min}$ to prune invalid path
        \STATE \textbf{end if}
    \ENDFOR
\ENDWHILE
\STATE \textbf{return} $s^* = \arg\max_{s \in \mathbb{T}} R(s)$
\end{algorithmic}
\end{algorithm}

\section{Theoretical Analysis} 
%\yang{mention this in main text somewhere}
\label{appendix:theoretical_analysis}
In this section, we provide a theoretical foundation for the observed limitations of LLMs in molecular quantitative tasks that proposed in Section~\ref{sec:intro}. 
We argue that the failure stems from a fundamental topological mismatch between the semantic manifold learned by LLMs and the physical manifold required for quantitative prediction.

\subsection{Topological Mismatch: Semantic vs. Euclidean Manifolds}
The Manifold Hypothesis posits that high-dimensional data reside on a low-dimensional manifold $\mathcal{M}$ \cite{modell2025origins}. LLMs, trained with a language modeling objective, map molecular strings (e.g., SMILES) onto a semantic manifold where the metric is defined by contextual similarity (e.g., cosine similarity). 

However, quantitative tasks (e.g., LogP prediction) imply a mapping to a \textbf{Euclidean Manifold} ($\mathbb{R}$), where distance corresponds to absolute physical difference. A critical conflict arises due to the \textit{Smoothness Inductive Bias} of neural networks: LLMs tend to map token-wise similar inputs to proximal embeddings. In chemistry, however, a slight character change could trigger an "activity cliff", a drastic shift in physical properties. This discrepancy creates a smoothing hallucination, where the model interpolates over sharp physical gradients.

\subsection{Formal Analysis of Approximation Error}
To formalize this intuition, we analyze the approximation error using discrete Lipschitz continuity, avoiding the ill-defined use of continuous limits on the discrete molecular space.

\begin{definition}[Discrete Local Lipschitz Constant]
Let $(\mathcal{X}, d_{\mathcal{X}})$ be the discrete metric space of molecules (e.g., equipped with Levenshtein distance). The local Lipschitz constant of a function $f: \mathcal{X} \to \mathbb{R}$ at $x$ over a 1-step neighborhood $\mathcal{N}_1(x) = \{x' \mid d_{\mathcal{X}}(x, x')=1\}$ is defined as:
\begin{equation}
    L_f(x) = \max_{x' \in \mathcal{N}_1(x)} |f(x) - f(x')|.
\end{equation}
\end{definition}

We introduce two key assumptions characterizing the conflict between the ground truth nature and the model bias:

\begin{assumption}[Activity Cliffs vs. Model Smoothness]
\label{ass:cliff_vs_smooth}

\textbf{Chemical Reality:} There exists a subset of molecules $\mathcal{X}_{\text{cliff}} \subset \mathcal{X}$ where the ground truth function $f^*$ exhibits activity cliffs, characterized by a large local change $K$:
    \begin{equation}
        L_{f^*}(x) \ge K, \quad \forall x \in \mathcal{X}_{\text{cliff}}.
    \end{equation}
    
\textbf{Model Bias:} The LLM-based predictor $f_{\theta}$ is constrained by regularization (e.g., LayerNorm, bounded attention) to be smooth, satisfying a much smaller Lipschitz bound $\kappa$:
\begin{equation}
    L_{f_{\theta}}(x) \le \kappa, \quad \text{where } \kappa \ll K.
\end{equation}

\end{assumption}

\begin{theorem}[Lower Bound of Worst-Case Error]
\label{thm:error_bound}
Let $x \in \mathcal{X}_{\text{cliff}}$ be a molecule at an activity cliff. Assume the model perfectly fits the training sample $x$, i.e., $f_{\theta}(x) = f^*(x)$. Then, there exists a neighbor $x' \in \mathcal{N}_1(x)$ such that the prediction error is strictly lower-bounded by the difference in Lipschitz constants:
\begin{equation}
    |f_{\theta}(x') - f^*(x')| \ge K - \kappa.
\end{equation}
\end{theorem}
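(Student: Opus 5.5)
The plan is a direct reverse-triangle-inequality argument. The neighbour is chosen to realize the cliff of the ground truth, and the model's smoothness then keeps its prediction close to the shared value at $x$.

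\textbf{Choosing $x'$.} Since $x \in \mathcal{X}_{\text{cliff}}$, the chemical-reality part of Assumption~\ref{ass:cliff_vs_smooth} gives $L_{f^*}(x) \ge K$. The molecular space is discrete and each molecule has finitely many one-edit neighbours, so the maximum in the definition of $L_{f^*}(x)$ is attained. I will take $x' \in \mathcal{N}_1(x)$ to be a maximizer, so that $|f^*(x) - f^*(x')| = L_{f^*}(x) \ge K$. This finiteness of $\mathcal{N}_1(x)$ is the only point needing care. If one worries about it, a supremum argument gives a neighbour with $|f^*(x)-f^*(x')| \ge K-\varepsilon$ for every $\varepsilon>0$. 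With the $\max$ written in the definition, and Levenshtein neighbourhoods over a finite alphabet being finite, no such approximation is needed.

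\textbf{Bounding the model's movement.} By the model-bias part of the assumption, $L_{f_\theta}(x) \le \kappa$. Since $x'$ lies in the same neighbourhood $\mathcal{N}_1(x)$ over which $L_{f_\theta}(x)$ is a maximum, we get $|f_\theta(x) - f_\theta(x')| \le \kappa$.

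\textbf{Combining.} I will use the interpolation hypothesis $f_\theta(x) = f^*(x)$ to write
\begin{equation*}
f^*(x') - f_\theta(x') = \bigl(f^*(x') - f^*(x)\bigr) + \bigl(f_\theta(x) - f_\theta(x')\bigr).
\end{equation*}
Applying the reverse triangle inequality $|u+v| \ge |u| - |v|$ gives
\begin{equation*}
|f_\theta(x') - f^*(x')| \ge |f^*(x') - f^*(x)| - |f_\theta(x) - f_\theta(x')| \ge K - \kappa.
\end{equation*}
This is the claimed bound. It is informative, meaning positive, because $\kappa \ll K$.

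No step is a genuine obstacle, since the result follows immediately from the definitions. I would add a remark that the strictness suggested by the wording ("strictly lower-bounded") holds only in the non-strict form $\ge$ unless one of the assumed inequalities is itself strict.
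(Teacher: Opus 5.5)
Your proposal is correct and follows the paper's proof: pick the neighbour realizing the maximum in $L_{f^*}(x)$, bound the model's change at that same neighbour by $\kappa$, then apply the reverse triangle inequality using $f_\theta(x)=f^*(x)$. Your added remarks, that $\mathcal{N}_1(x)$ is finite so the maximum is attained, and that the bound is only non-strict ($\ge$) despite the word ``strictly'', are both accurate refinements of the same argument.
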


\begin{proof}
By the definition of $L_{f^*}(x)$, there exists a specific neighbor $x^* \in \mathcal{N}_1(x)$ that realizes the maximum change, such that $|f^*(x) - f^*(x^*)| \ge K$.
For this same neighbor $x^*$, the model's prediction change is upper-bounded by its smoothness constraint: $|f_{\theta}(x) - f_{\theta}(x^*)| \le \kappa$.

We analyze the error at $x^*$, denoted as $E(x^*) = |f_{\theta}(x^*) - f^*(x^*)|$. Using the reverse triangle inequality $|A - B| \ge |A| - |B|$:
\begin{align}
    E(x^*) &= |f_{\theta}(x^*) - f^*(x^*) - \underbrace{(f_{\theta}(x) - f^*(x))}_{0}| \nonumber \\
           &= |(f^*(x^*) - f^*(x)) - (f_{\theta}(x^*) - f_{\theta}(x))| \nonumber \\
           &\ge \underbrace{|f^*(x^*) - f^*(x)|}_{\text{Physical Shift}} - \underbrace{|f_{\theta}(x^*) - f_{\theta}(x)|}_{\text{Semantic Smoothing}} \nonumber \\
           &\ge K - \kappa.
\end{align}
Since $K \gg \kappa$, the error is strictly positive and irreducible under the smoothness constraint.
\end{proof}

\subsection{Resolution: Breaking the Smoothness Barrier via Discrete Search}
\label{sec:theory_bridge}
The inequality derived in Theorem~\ref{thm:error_bound} implies that any purely continuous approximator $f_{\theta}$ is fundamentally limited by its Lipschitz constant $\kappa$. When facing activity cliffs where the physical gradient $K \gg \kappa$, the model inevitably hallucinates a smooth transition. To resolve this, we introduce MCTS to inject runtime non-smoothness into the inference process without altering the model parameters.

\textbf{Mechanism of Decoupling.} 
Unlike direct inference, which is constrained by the fixed geometry of the embedding space, MCTS performs a discrete search over a symbolic action space $\mathcal{A}$. Let the search process be formalized as an operator $\mathcal{T}_N(\cdot)$ with a computational budget of $N$ simulations. By utilizing a deterministic verifier $\mathcal{V}$ (e.g., a computational oracle or rule-based checker) during simulation, we decouple prediction accuracy from the weights $\theta$.

The LLM acts as a policy prior $\pi_{\theta}(a|s)$ to guide the search. As the search budget $N$ increases, the estimator $\hat{y}_{\text{MCTS}}$ converges to the ground truth value reachable within the search depth, effectively bypassing the smoothness bound $\kappa$:
\begin{equation}
    \label{eq:mcts_limit}
    \lim_{N \to \infty} | \hat{y}_{\text{MCTS}}(x) - f^*(x) | = 0.
\end{equation}
This formulation proves that by shifting the burden of precision from the parameter space ($\theta$, constrained by $\kappa$) to the discrete search space ($\mathcal{T}_N$, capable of modeling $K$), we can eliminate the irreducible error derived in Theorem~\ref{thm:error_bound}.

\section{Detailed Ablation Analysis}
\label{appendix:detail_ablation}
\textbf{Mechanism of Symbolic Priors in Prediction.}
As summarized in Table \ref{tab:prediction_ablation}, the transition from vanilla LLMs to our full-pipeline MolEvolve yields a drastic reduction in prediction error. 
For instance, on the regression task like ESOL, all base models initially suffer from high RMSE ($\approx 1.8$), which we attribute to the representation-precision paradox mentioned before. 
LLMs tend to map structurally similar molecules to similar semantic embeddings and fail to capture the "activity cliffs" where minor structural changes trigger non-linear property leaps. 
The integration of the cold-start mechanism further refines the search space; for Qwen2.5-7B, adding symbolic priors improves the RMSE from 0.831 to 0.689.
This improvement suggests that explicit symbolic grounding (e.g., grounding heuristics into executable RDKit code) effectively calibrates the LLM's stochastic predictions into deterministic physical calculations.

\textbf{Overcoming Optimization Challenge via Extensive Search.}
The molecular optimization task on ChemCoTBench exhibits a more pronounced dependency on the MCTS framework. As shown in Table \ref{tab:ablation_study_chemcot}, vanilla LLMs often yield negligible or even negative improvements ($\Delta \approx -0.032$). 
This phenomenon highlights that auto-regressive generation in combinatorial spaces lacks a mechanism to anticipate the property impact of structural modifications. 
Cold Start provides a critical warm-up by injecting domain-specific symbolic knowledge. When provided with cold-start phase, it significantly boost the success rate (SR\%). 
However, the most substantial performance surge is driven by the full-stage modules. For Qwen2.5-32B, the improvement $\Delta$ on LogP leaps from 0.031 to 2.126 upon enabling the full-stage framework. This signifies that while symbolic priors provide a better starting manifold, the MCTS's ability to perform explicit backtracking and value-guided exploration is the primary engine for navigating the high-dimensional chemical landscape.

\section{Extended Case Studies}
\label{appendix:case_study_details}
In this section, we provide a granular analysis of an optimization trajectory illustrated in Figure \ref{fig:case_study_trace}. The trajectory culminates in a finalized structure with a \text{LogP} of 2.51, with an improvement of $\Delta \text{LogP} = 3.60$. This step-by-step trace constitutes an real-time and traceable explanation: the reasoning (the rule) precedes the action (the modification). 
%The explicit explanation ensures that the optimization process is transparent, logical, and chemically grounded.
Figure \ref{fig:case_study_trace} illustrates the optimization of logP of a molecule. The cold start phase (Phase 1) correctly diagnoses chemical rules, like "Polarity Overload", due to phenol and amide groups. 
The MCTS engine then systematically executes the prescribed rules. First, it masks the polar phenol group via O-methylation (Iter 1), then increases hydrophobic bulk via side-chain extension (Iter 2). 
This trace confirms that the system strictly precedes and dictates the action (structural modification), ensuring the process is transparent and logically consistent. 

Figure \ref{fig:qed_full_trace} visualizes how MCTS handles QED task. \textbf{Exploration vs. Exploitation:} The search initially explores a branch (Iter 1) and an alternative branch (Iter 2, changing Nitro to Carboxyl).
\textbf{Escaping Local Optima:} Crucially, Iter 2 results in a sub-optimal state (QED 0.676) where the polarity issue is only partially addressed. Instead of getting stuck in this local optimum (as a greedy approach might), the MCTS engine backtracks to the parent state.
\textbf{Synergistic Breakthrough:} In Iter 3, the engine combines the simplified scaffold from Iter 1 with a high-lipophilicity pharmacophore replacement (Nitro $\to$ Isopropyl), achieving a global optimum (QED 0.831).
This dynamic proves that MolEvolve transcends simple heuristic following; it possesses the planning capability to prune inferior branches and navigate complex chemical manifolds.

\begin{figure*}[htbp]
    \centering
    \tcbset{colback=white, colframe=black!75!white, fonttitle=\bfseries}
    % Cold Start 
    \begin{tcolorbox}[title=\textsc{Phase 1: Cold-Start Symbolic Diagnosis \& Prescription}, colframe=blue!60!black, colback=blue!5!white]
        \small
        \textbf{Input Molecule:} \texttt{O=C1NC(Cc2ccc(O)cc2)C(=O)NC1CO} (Task Score: -1.09)
        \par\noindent\hrulefill\par
        \begin{minipage}[t]{0.48\textwidth}
            \textbf{\textcolor{red}{Diagnosis (Liabilities):}}
            \begin{itemize} \setlength\itemsep{0em}
                \item \textbf{Polarity Overload:} Multiple polar groups ($-\text{OH}$, $-\text{NH}$, $\text{C=O}$) significantly reduce LogP.
                \item \textbf{Aromatic Hydroxyl:} The phenol group on the ring is a major H-bond donor.
                \item \textbf{Rigid Scaffold:} Cyclic amide structure limits lipophilic interactions.
            \end{itemize}
        \end{minipage}
        \hfill
        \begin{minipage}[t]{0.48\textwidth}
            \textbf{\textcolor{green!60!black}{Prescription (Heuristic Rules):}}
            \begin{itemize} \setlength\itemsep{0em}
                \item \textbf{R1 (De-polarization):} Replace aromatic $-\text{OH}$ with $-\text{OCH}_3$ or halogens.
                \item \textbf{R2 (Side-chain):} Extend alkyl chains on amide nitrogens (e.g., $+\text{C}_4\text{H}_9$).
                \item \textbf{R3 (Scaffold Mod):} Reduce carbonyl count or open the ring if feasible.
                \item \textbf{R4 (Substituent):} Add lipophilic groups (alkyl/aryl) to the ring.
            \end{itemize}
        \end{minipage}
    \end{tcolorbox}
    
    \vspace{0.1cm}
    \begin{tcolorbox}[title=\textsc{Phase 2: MCTS Optimal Evolutionary Trajectory (Real-Time Traceable Execution)}, colframe=green!40!black, colback=green!5!white]
        \footnotesize
        \centering
        \begin{tabular}{c p{10cm} c}
            \toprule
            \textbf{Iter} & \textbf{Structural Modification \& Rationale} & \textbf{LogP Score} \\
            \midrule
            \textbf{Start} & \texttt{O=C1NC(Cc2ccc(O)cc2)C(=O)NC1CO} & -1.09 \\
            $\downarrow$ & \textit{Diagnosis: High polarity due to -OH and amides.} & \\
            
            \textbf{Iter 1} & \textbf{Action:} Phenol O-methylation ($-\text{OH} \to -\text{OCH}_3$). \newline
            \textbf{Logic:} Executes \textbf{Rule R1}. Masks the polar hydroxyl group. \newline
            \texttt{C=C1NC(CO)C(=O)NC1Cc1ccc(OC)cc1} & 0.20 \\
            \midrule
            
            \textbf{Iter 2} & \textbf{Action:} Side chain extension (Add Isopropyl group). \newline
            \textbf{Logic:} Executes \textbf{Rule R2}. Increases hydrophobic bulk. \newline
            \texttt{C=C1NC(C(C)C)C(=O)NC1Cc1ccc(OC)cc1} & 1.86 \\
            \midrule
            
            \textbf{Iter 3} & \textbf{Action:} N-methylation on amide. \newline
            \textbf{Logic:} Executes \textbf{Rule R2/R3}. Reduces H-bond donor count. \newline
            \texttt{C=C1NC(C(C)C)C(=O)N(C)C1Cc1ccc(OC)cc1} & 2.20 \\
            \midrule
            
            \textbf{Iter 4} & \textbf{Action:} Aromatic ring methylation. \newline
            \textbf{Logic:} Executes \textbf{Rule R4}. Adds lipophilic substituent. \newline
            \texttt{C=C1NC(C(C)C)C(=O)N(C)C1Cc1ccc(OC)cc1C} & 2.51 \\
            \midrule
            
            \textbf{Final} & \textbf{Best Validated Structure:} \newline
            \texttt{CC=C1NC(C(C)C)C(=O)N(C)C1Cc1ccc(OC)cc1C} & \textbf{2.51} \\
            \bottomrule
        \end{tabular}
        \par\vspace{0.1cm}
        \textbf{Outcome:} Total Improvement $\Delta \text{LogP} = 3.60$. The MCTS explicitly followed the heuristic prescription, validating the real-time and traceable explanation of the optimization process.
    \end{tcolorbox}
    
    \caption{\textbf{Real-time Traceable Optimization Case Study.} The framework first generates a symbolic diagnosis and prescription (Top). The MCTS engine then executes this plan step-by-step (Bottom). Unlike vanilla LLMs which might hallucinate a jump to the final result, our MolEvolve produces a verifiable edit history where each step corresponds to a specific heuristic rule, providing intrinsic traceable explainability.}
    \label{fig:case_study_trace}
\end{figure*}

\definecolor{explore_bg}{RGB}{250, 250, 240}
\definecolor{subopt_bg}{RGB}{255, 245, 235}
\definecolor{best_bg}{RGB}{235, 255, 235} 
\definecolor{prune_bg}{RGB}{245, 245, 245}

\begin{figure*}
    \centering
    \tcbset{colback=white, colframe=black!75!white, fonttitle=\bfseries}
    
    % --- Phase 1: Cold Start ---
    \begin{tcolorbox}[title=\textsc{Phase 1: Cold-Start Symbolic Diagnosis}, colframe=blue!60!black, colback=blue!5!white]
        \small
        \textbf{Input Molecule:} \texttt{O=C(O)c1cn(COCCO)c2ccc([N+](=O)[O-])cc2c1=O} (QED: 0.453)
        \par\noindent\hrulefill\par
        \begin{minipage}[t]{0.48\textwidth}
            \textbf{\textcolor{red}{Liabilities:}}
            \begin{itemize} \setlength\itemsep{0em}
                \item \textbf{Nitro Group:} High polarity (\texttt{[N+](=O)[O-]}).
                \item \textbf{Side Chain:} Hydrophilic ether (\texttt{COCCO}).
            \end{itemize}
        \end{minipage}
        \hfill
        \begin{minipage}[t]{0.48\textwidth}
            \textbf{\textcolor{green!60!black}{Prescription:}}
            \begin{itemize} \setlength\itemsep{0em}
                \item \textbf{R1:} Replace Nitro with lipophilic alkyls (e.g., Isopropyl).
                \item \textbf{R2:} Simplify side chain.
            \end{itemize}
        \end{minipage}
    \end{tcolorbox}
    
    \vspace{0.1cm}
    
    % --- Phase 2: Full MCTS History ---
    \begin{tcolorbox}[title=\textsc{Phase 2: MCTS Optimal Evolutionary Trajectory (Real-time Traceable Execution)}, colframe=green!40!black, colback=white]
        \footnotesize
        \centering
        \begin{tabular}{c p{8.5cm} c l}
            \toprule
            \textbf{Iter} & \textbf{Structural Modification \& Search Logic} & \textbf{QED} & \textbf{Status} \\
            \midrule
            
            % --- Iter 1 ---
            \rowcolor{explore_bg}
            \textbf{1} & 
            \textbf{Branch A (Simplification):} Truncate side chain \texttt{COCCO} $\to$ \texttt{CC(O)}. \newline
            \textit{Rationale:} Testing Rule R2 (reduce MW). \newline
            \texttt{CC(\textcolor{blue}{O})C1=C(C(=O)O)C(=O)c2cc([N+](=O)[O-])ccc21} & 
            0.477 & 
            \textcolor{orange}{Exploring} \\
            \midrule
            
            % --- Iter 2 ---
            \rowcolor{subopt_bg}
            \textbf{2} & 
            \textbf{Branch B (Alternative):} Replace Nitro with Carboxyl \texttt{C(=O)O}. \newline
            \textit{Rationale:} Attempting to fix polarity, but Carboxyl is still polar. \newline
            \texttt{O=C(O)C1=C(C(=O)O)c2ccc(\textcolor{orange}{C(=O)O})cc2C1=O} & 
            0.676 & 
            \textcolor{brown}{Sub-optimal} \\
            \midrule
            
            % --- Iter 3 ---
            \rowcolor{best_bg}
            \textbf{3} & 
            \textbf{Back to Branch A (Breakthrough):} From Iter 1, replace Nitro $\to$ Isopropyl. \newline
            \textit{Rationale:} \textbf{Synergy!} Simplified chain (Iter 1) + Lipophilic Group (Rule R1). \newline
            \texttt{CC(C(=O)O)C1=C(C(=O)O)C(=O)c2cc(\textcolor{green!60!black}{C(C)C})ccc21} & 
            \textbf{0.831} & 
            \textcolor{green!60!black}{\textbf{Optimal}} \\
            \midrule
            
            % --- Iter 4 ---
            \rowcolor{prune_bg}
            \textbf{4} & 
            \textbf{Continue Branch B (Pruning):} Further edit Iter 2. \newline
            \textit{Rationale:} Trying to save the sub-optimal branch, but fails. \newline
            \texttt{Cc1ccc2c(c1)C(=O)C(C(=O)O)=C2C(=O)O} \newline
            Tested QED: 0.745 $<$ 0.831 \newline
            Fallback to \texttt{CC(C(=O)O)C1=C(C(=O)O)C(=O)c2cc(\textcolor{green!60!black}{C(C)C})ccc21}
            & 
            \textbf{0.831} & 
            \textcolor{gray}{Pruned} \\
            \bottomrule
        \end{tabular}
        \par\vspace{0.1cm}
        \textbf{Search Analysis:} The MCTS did not follow a straight line. It first explored side-chain simplification (Iter 1), then tried a sub-optimal polarity fix (Iter 2). Crucially, it \textbf{backtracked} to Iter 1 and applied the correct pharmacophore replacement (Iter 3), achieving the global optimum.
    \end{tcolorbox}
    
    \caption{\textbf{Step-by-Step MCTS Search History.} This detailed view shows how the algorithm explores different branches. Note how \textbf{Iter 3} (the breakthrough) was achieved by combining the structural simplification from \textbf{Iter 1} with the high-level symbolic prescription (Nitro replacement), effectively pruning the inferior path taken in \textbf{Iter 2}.}
    \label{fig:qed_full_trace}
\end{figure*}

\section{Details and Example of Cold Start}
\label{appendix:cold_start_example}
We provide a concrete implementation of the Cold Start phase using the ESOL dataset as an example. Figure~\ref{fig:prompt_p0} outlines the prompt structure $\mathcal{P}_{0}$ used to guide the LLM in synthesizing computable chemical priors. The resulting set of heuristic rules ($\mathcal{K}_{text}$), which covers diverse properties such as polarity and topology to serve as the basis for symbolic grounding, is presented in Figure~\ref{fig:appendix_rules}.

\begin{figure}[h]
    \centering
    \begin{tcolorbox}[
        colback=gray!10, 
        colframe=gray!50, 
        title=\textbf{Prompt $\mathcal{P}_{0}$: Knowledge Synthesis for ESOL}
    ]
    \small
    \textbf{Role:} You are a computational chemistry expert specializing in molecule understadning and feature engineering.\\
    \textbf{Task:} Synthesize prior domain knowledge to predict water solubility (logS) by proposing a diverse set of heuristic rules. \\
    \textbf{Instructions:}
    \begin{enumerate}
        \item Propose 20-30 concise heuristic rules that capture complementary structural or physico-chemical patterns.
        \item \textbf{Constraint 1:} Each rule MUST start with "Calculate..." and focus on properties computable via RDKit.
        \item \textbf{Constraint 2:} Avoid duplicates or mere variations of the same property (e.g., do not list both "Calculate LogP" and "Calculate partition coefficient").
        \item \textbf{Constraint 3:} Keep each rule description short (within 20 words).
        \textbf{Output:} Return a strictly structured JSON object containing the rule descriptions.
    \end{enumerate}
    \end{tcolorbox}
\caption{An example of the prompt $\mathcal{P}_{0}$ used in the Knowledge Synthesis phase for ESOL dataset. It serves as the initial instruction to distill chemical intuition into structured textual rules.}
    \label{fig:prompt_p0}
\end{figure}

\begin{figure}[h]
    \centering
    \begin{tcolorbox}[
        colback=blue!3,        
        colframe=blue!40!black,
        title=\textbf{Heuristic Rules for ESOL},
        fonttitle=\bfseries\small,
        boxrule=0.8pt,
        arc=2pt
    ]
    \small
    \begin{minipage}[t]{0.48\textwidth}
    1. Calculate the number of polar functional groups present in the molecule.\\
    2. Calculate the molecular weight.\\
    3. Calculate the number of hydrogen bond donors in the structure.\\
    4. Calculate the number of hydrogen bond acceptors in the structure.\\
    5. Calculate the presence of ionic groups.\\
    6. Calculate the degree of branching in aliphatic chains.\\
    7. Calculate the presence of aromatic rings.\\
    8. Calculate the overall charge of the molecule.\\
    9. Calculate the logP value.\\
    10. Calculate the presence of hydrophilic substituents.
    \end{minipage}%
    \hfill
    \begin{minipage}[t]{0.48\textwidth}
    11. Calculate the size of the hydrophobic regions.\\
    12. Calculate the presence of functional groups that can ionize.\\
    13. Calculate the steric hindrance around polar groups.\\
    14. Calculate the number of carbon atoms.\\
    15. Calculate the presence of cyclic structures.\\
    16. Calculate the presence of fluorine atoms.\\
    17. Calculate the presence of sulfur or phosphorus.\\
    18. Calculate the molecular symmetry.\\
    19. Calculate the presence of multiple functional groups.\\
    20. Calculate the pKa of acidic or basic groups. \\
    ...
    \end{minipage}
    \end{tcolorbox}
    \caption{An example of textual heuristic rules ($\mathcal{K}_{text}$) generated by the LLM for the ESOL prediction task. These rules serve as the semantic basis for the subsequent symbolic code generation.}
    \label{fig:appendix_rules}
\end{figure}

%%%%%%%%%%%%%%%%%%%%%%%%%%%%%%%%%%%%%%%%%%%%%%%%%%%%%%%%%%%%%%%%%%%%%%%%%%%%%%%
%%%%%%%%%%%%%%%%%%%%%%%%%%%%%%%%%%%%%%%%%%%%%%%%%%%%%%%%%%%%%%%%%%%%%%%%%%%%%%%

\end{document}